\documentclass[10pt]{article}
\usepackage{main}
\usepackage{microtype}
\usepackage{graphicx}
\usepackage{subcaption}
\usepackage{booktabs} 
\usepackage{tabularx}
\usepackage{makecell}
\usepackage[table]{xcolor}
\usepackage{comment}
\usepackage{enumitem}
\usepackage{hyperref}
\usepackage{macros}
\usepackage{amsmath}
\usepackage{amssymb}
\usepackage{mathtools}
\usepackage{amsthm}
\usepackage{subcaption}
\usepackage{wrapfig}

\theoremstyle{plain}
\newtheorem{theorem}{Theorem}[section]
\newtheorem{proposition}[theorem]{Proposition}
\newtheorem{lemma}[theorem]{Lemma}

\theoremstyle{definition}
\newtheorem{definition}[theorem]{Definition}

\theoremstyle{remark}
\newtheorem{remark}[theorem]{Remark}

\newcommand{\AlgName}{\textsc{melinoe}}

\setopalleficon{Figures/MelinoeImage.png}   

\title{MELINOE: Fine-Tuning Enables Memory-Efficient Inference for Mixture-of-Experts Models}
\author{Arian Raje (\texttt{araje@andrew.cmu.edu})}
\author{Anupam Nayak}
\author{Gauri Joshi}
\affil{Dept. of Electrical and Computer Engineering, Carnegie Mellon University}

\begin{document}
\maketitle

\begin{abstract}
Mixture-of-Experts (MoE) model architectures can significantly reduce the number of activated parameters per token, enabling computationally efficient training and inference. However, their large overall parameter counts and model sizes have precluded their widespread usage in resource-constrained settings as all of the parameters must still be loaded into GPU memory. Prior works aim to address this memory bottleneck by offloading certain experts into CPU memory and porting them to GPU memory only when they are activated. In practice, these methods suffer from the significant I/O latency incurred by expert transfer. We present \AlgName, a method that fine-tunes an MoE model to more strongly prefer activating a smaller number of experts per sequence. Caching these preferred experts in GPU memory reduces expert churn and CPU-GPU transfer overhead. \AlgName\ increases throughput by $1.2$–$3\times$ over efficient baselines and up to $14.7\times$ over transfer-heavy baselines while retaining or even improving the performance of the model on a downstream task, making it a reliable method for improving MoE inference efficiency.
\end{abstract}
\section{Introduction}
\label{sec:intro}

Mixture-of-Experts (MoE) models are a Large Language Model (LLM) architecture that aim to reduce the per-token computational cost of training and inference. MoEs achieve this reduction in compute by fragmenting the traditionally dense Feed Forward Network (FFN) layer in the model architecture into a sparsely gated set of multiple FFNs, referred to as ``experts'' \cite{shazeer2017, lepikhin2021gshard, 10.5555/3586589.3586709}. This reparameterization of the FFN used in transformer-based architectures makes MoE models an efficient alternative to their traditional dense counterparts. MoE models have become the flagship models for open-source model developers \cite{jiang2024mixtral} or have been offered as an alternative to dense transformers in a model suite \cite{muennighoff2025olmoe, abdin2024phi3technicalreporthighly, dai-etal-2024-deepseekmoe}. However, these gains do not come for free; while MoEs execute only a fraction of their parameters per token, their total parameter footprint remains large. During autoregressive generation, any expert may be routed to at any given step, so the corresponding expert's weights must be resident in GPU memory prior to its activation. 
\\ \\
The necessary residency of experts in GPU memory prior to activation results in one of two scenarios. First, all experts are simultaneously loaded into GPU memory, which can be prohibitive in resource-constrained settings with limited memory. This regime is also wasteful since only a small subset of experts is used per token, yet many unused expert parameters still occupy GPU memory. An alternative paradigm is where only a small subset of the total experts is kept in GPU memory while the remaining experts are $\textit{offloaded}$ into CPU memory \cite{eliseev2023fastinferencemixtureofexpertslanguage, xue2025moeinfinityefficientmoeinference, zhou2025floe}. The GPU resident experts constitute an expert $\textit{cache}$, and experts are fetched from CPU memory on demand when selected by the router. While this reduces peak GPU memory usage, it introduces a new bottleneck. Frequent expert swaps incur substantial Peripheral Component Interconnect Express (PCIe) transfer latency and can stall generation when the required expert is not already cached. The resulting churn can dominate end-to-end inference latency, especially when expert routing is diverse across tokens causing frequent transfers. When GPU memory is especially constrained, these problems are exacerbated as only a limited number of experts can be retained in the GPU-resident expert cache (Table \ref{tab:decoding_speed}). 
\\ \\ 
Prior methods propose a variety of alternative approaches to balancing memory efficiency and I/O latency. Mixtral-Offloading \cite{eliseev2023fastinferencemixtureofexpertslanguage} offloads most experts to CPU DRAM and keeps a small per-layer GPU-resident expert cache. However, I/O slowdowns still hinder improvements in end-to-end inference latency. Their aggressive mixed-precision quantization also trades model quality for memory efficiency. MoE-Infinity \cite{xue2025moeinfinityefficientmoeinference} and FLoE \cite{zhou2025floe} seek to reduce I/O transfers by using sophisticated prefetching techniques to predict which experts will be activated prior to the router selecting them. Yet, their gains depend on prediction accuracy and routing locality, where a subset of experts are activated with high probability for consecutive tokens, so diverse routing could still trigger a significant number of cache misses and evictions. Fiddler \cite{kamahori2025fiddler} reduces PCIe traffic by executing some expert computation on the CPU, but its gains are contingent on CPU capability and diminish as per-expert token counts grow, where CPU execution becomes slow and weight transfers to GPU become preferable. Meaningfully, several prior systems do not directly target reducing the $\textit{number of CPU–GPU transfers}$, which can be a dominant cost in resource-constrained MoE inference. In other words, prior work largely treats routing as fixed and optimizes around it via prefetching, offloading, or quantization. An alternative perspective is to treat routing as malleable and directly shape expert activation patterns.
\\ \\
In this vein, we present $\textbf{M}$ixture-of-$\textbf{E}$xperts with $\textbf{L}$ightweight $\textbf{I}$nference and $\textbf{N}$etwork $\textbf{O}$ffloading $\textbf{E}$fficiency, \AlgName, a framework that targets the number of CPU-GPU transfers in MoE offloading applications to reduce the amount of I/O overhead. With \AlgName, the model is fine-tuned with an auxiliary loss function that penalizes the model for inducing excessive cache transfers. This fine-tuning procedure occurs prior to deploying the MoE model in a memory-constrained system. Since individual sequences already have slight expert preferences, fine-tuning tilts the model towards using a small subset of experts more heavily while retaining or improving the performance of the model on a downstream task. \AlgName\ then employs an MLP-based predictor to forecast these preferred experts prior to decoding, making prefetching more reliable since activated experts are less diverse per sequence. \AlgName\ can improve throughput measured by tokens/s by $1.2$–$3\times$ over efficient baselines and up to $14.7\times$ over transfer-heavy baselines across model architectures and hardware configurations, demonstrating its reliability and robustness. Notably, since the fine-tuning procedure is orthogonal to prior offloading techniques, it composes naturally with other baselines and improves their effectiveness. \AlgName\ provides a practical path to MoE deployment by fine-tuning the model to reduce I/O latency without sacrificing model quality. 
\begin{table}[t]
\centering
\footnotesize
\setlength{\tabcolsep}{3.5pt}
\renewcommand{\arraystretch}{1.15}
\caption{Decoding throughput (tokens/s) on an NVIDIA H100 vs.\ cache size (fraction of experts resident in GPU VRAM). Throughput drops significantly when fewer experts are resident. Mixtral-8x7B does not fit on a single H100 GPU (80 GB VRAM) in FP16.}
\label{tab:h100_cache_coverage}

\begin{tabularx}{0.98\linewidth}{@{}l *{3}{>{\centering\arraybackslash}X}@{}}
\specialrule{0.09em}{0.2em}{0.2em}
\rowcolor{gray!12}
\textbf{Model} &
\makecell{\textbf{Cache}\\\textbf{25\% Experts}} &
\makecell{\textbf{Cache}\\\textbf{50\% Experts}} &
\makecell{\textbf{Cache}\\\textbf{All Experts}} \\
\specialrule{0.06em}{0.2em}{0.2em}
\textbf{OLMoE} & $15.68$ & $23.84$ & $37.84$ \\
\textbf{Phi-3.5-MoE} & $4.43$ & $8.04$ & $19.91$ \\
\textbf{Mixtral-8x7B} & $1.46$ & $2.32$ & -- \\
\specialrule{0.09em}{0.2em}{0.2em}
\end{tabularx}
\vspace{-4mm}
\label{tab:decoding_speed}
\end{table}

\section{Problem Setup and Motivation}
\label{sec:motivation}
MoE models replace the dense FFN in each transformer block with a sparsely gated set of FFN ``experts''. For each token in the sequence, a router selects a subset of experts to execute, producing token-level activations, while the remaining experts are inactive. We denote by $\Ex^{(\ly)}$ the set of experts in the MoE layer $\ly \in [1\cdots\Ly]$.  Let $\Exm := |\Ex^{(\ly)}|\;\forall \ly \in [1\cdots\Ly]$. Individual experts in this layer are indexed as $\Ex^{(\ly)}_i$, where $i \in \{1, \cdots \Exm\}$.
 The output of the $\ly$-th MoE layer for an input $\xin\in \mathbb{R}^d$ is given by
 \begin{equation}
\p^{(\ly)} \; = \;  \mathrm{softmax}\!\left(\mathbf{W}_r^{(\ly)} \xin\right),
\
\rv^{(\ly)} = \;  \mathrm{Top}\text{-}\K\!\left(\p^{(\ly)}\right), \nn
\end{equation}
\begin{equation}
    \y \; 
= \; \sum_{\rv^{(\ly)}_i =1} \p_{i}^{(\ly)}\,\Ex^{(\ly)}_i(\xin).\label{eq:routerout} 
\end{equation}
Here, $\mathbf{W}_r^{(\ly)} \in \mathbb{R}^{\Exm \times d}$ denotes the router weight matrix at layer $\ly$. The $\mathrm{Top}\text{-}\K$ operator selects the $\K$ experts with the highest routing probabilities, setting the corresponding entries of $\rv^{(\ly)}$ to $1$ and masking all remaining experts. Thus, $\|\rv^{(\ly)}\|_1 = \K$ . The layer output is then computed by activating only the selected experts, as shown in Equation~\eqref{eq:routerout}.
\\ \\
The hyperparameter $\K$ controls the number of experts activated per token and is model-specific, for example $\K=2$ in Mixtral-8x7B. For hidden dimension $d$ and intermediate dimension $d_{\mathrm{ff}}$, each expert is implemented as an MLP with three linear projections: a gate $\left(\W^{(\ly)}_{g,i}\in\mathbb{R}^{d_{\mathrm{ff}} \times d}\right)$, an up  $\left(\W^{(\ly)}_{u,i}\in\mathbb{R}^{d_{\mathrm{ff}} \times d}\right)$, and a down $\left(\W^{(\ly)}_{d,i}\in\mathbb{R}^{d \times d_{\mathrm{ff}} }\right)$ projection. The forward computation of expert $i$ at layer $\ly$ is
\begin{equation}
\Ex_{i}^{(\ly)}(\xin)
\;=\;
\W^{(\ly)}_{d,i}
\Big(
\phi\!\left(\W^{(\ly)}_{g,i}(\xin)\right)
\odot
\W^{(\ly)}_{u,i}(\xin)
\Big).
\end{equation}
where $\odot$ is an element-wise product and $\phi$ is the gate nonlinearity. Although only a few experts are evaluated per token, the expert parameters dominate the model's memory footprint. In OLMoE, the experts constitute $93\%$ of the weights in the model, and in Mixtral-8x7B, the experts make up $96\%$. This discordance between expert activations and their memory footprint motivates offloading applications. 

\paragraph{Expert Offloading Systems.} Prior works mitigate MoE model's memory requirements by storing most expert weights in CPU DRAM and keeping only a small subset resident in GPU VRAM as an expert cache. During decoding, when the router selects an expert that is not currently resident, the system must fetch that expert’s weights from DRAM and transfer them over the CPU–GPU interconnect, typically PCIe, into VRAM before the expert can be executed on the GPU. End-to-end decoding time therefore includes both GPU computation and transfer-induced stalls
\begin{equation}
\text{Time}_{\text{decode}} \; \approx \; \text{Time}_{\text{compute}} \; + \; N_{\text{miss}} \; \cdot \;  \text{Time}_{\text{transfer}}
\end{equation}
where $N_{\text{miss}}$ refers to the number of cache misses where the desired expert is not currently stored in GPU VRAM. Since expert weights dominate the model’s parameter count, even modest cache budgets can lead to substantial transfer overhead. In many cases, $N_{\text{miss}}$ can reach tens of thousands even over short generations (Figure \ref{fig:olmoe_transfers}). During pretraining, load balancing objectives encourage broad expert utilization, a beneficial goal in batched, multi-GPU settings to ensure experts on different devices take roughly the same amount of time to execute. However, such objectives increase the number of distinct experts touched during generation, escalating the number of transfers from the expert cache. 

\begin{figure}[t]
  \centering
  \begin{subfigure}[t]{0.49\linewidth}
    \centering
    \includegraphics[width=\linewidth]{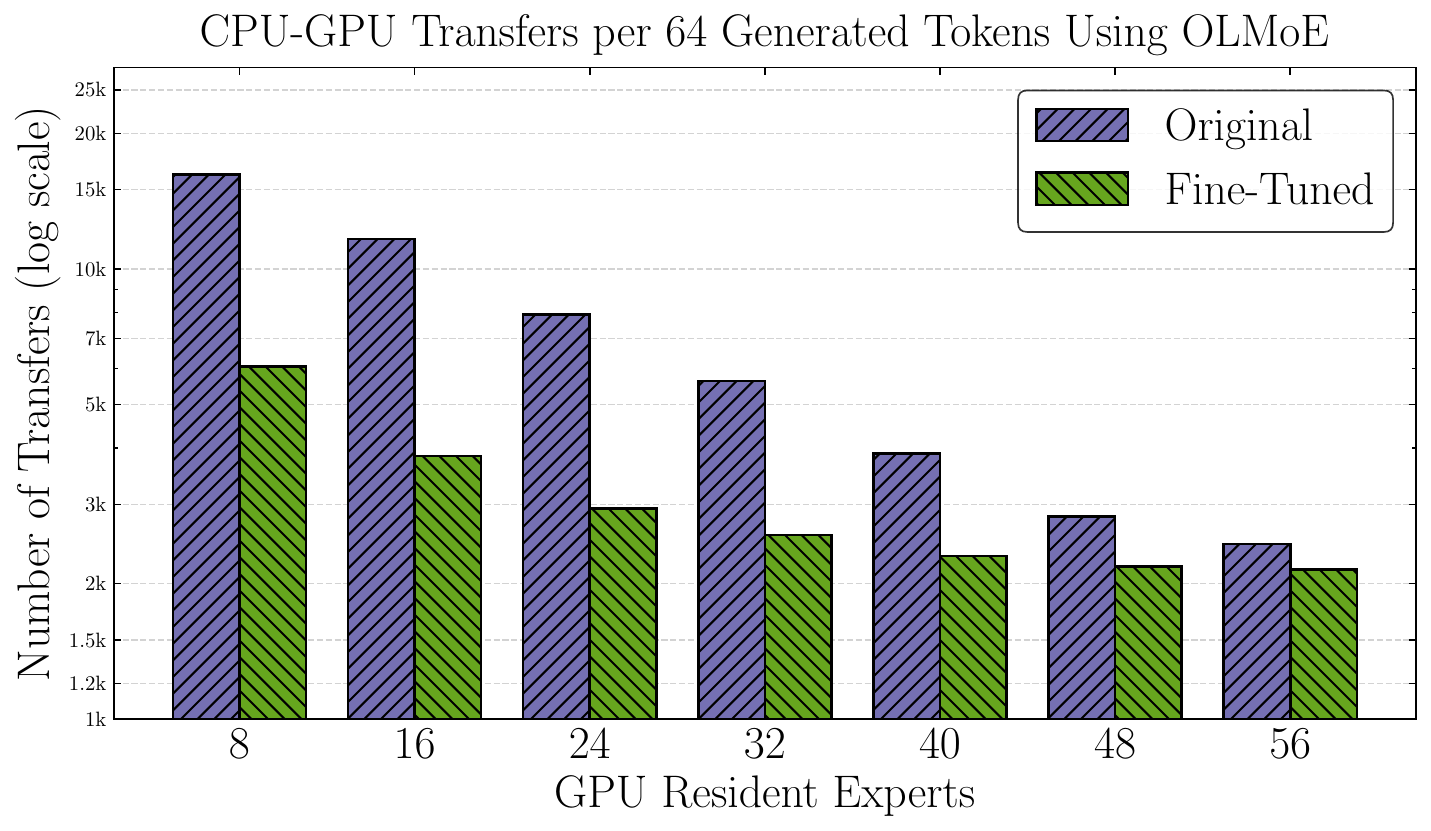}
    \caption{Number of Host-to-Device (H2D) and Device-to-Host (D2H) transfers (log scale) when generating 64 tokens using the OLMoE model (original and fine-tuned with auxiliary loss). Fine-tuning can reduce weight transfers by $3.03\times$.}
    \label{fig:olmoe_transfers}
  \end{subfigure}\hfill
  \begin{subfigure}[t]{0.49\linewidth}
    \centering
    \includegraphics[width=\linewidth]{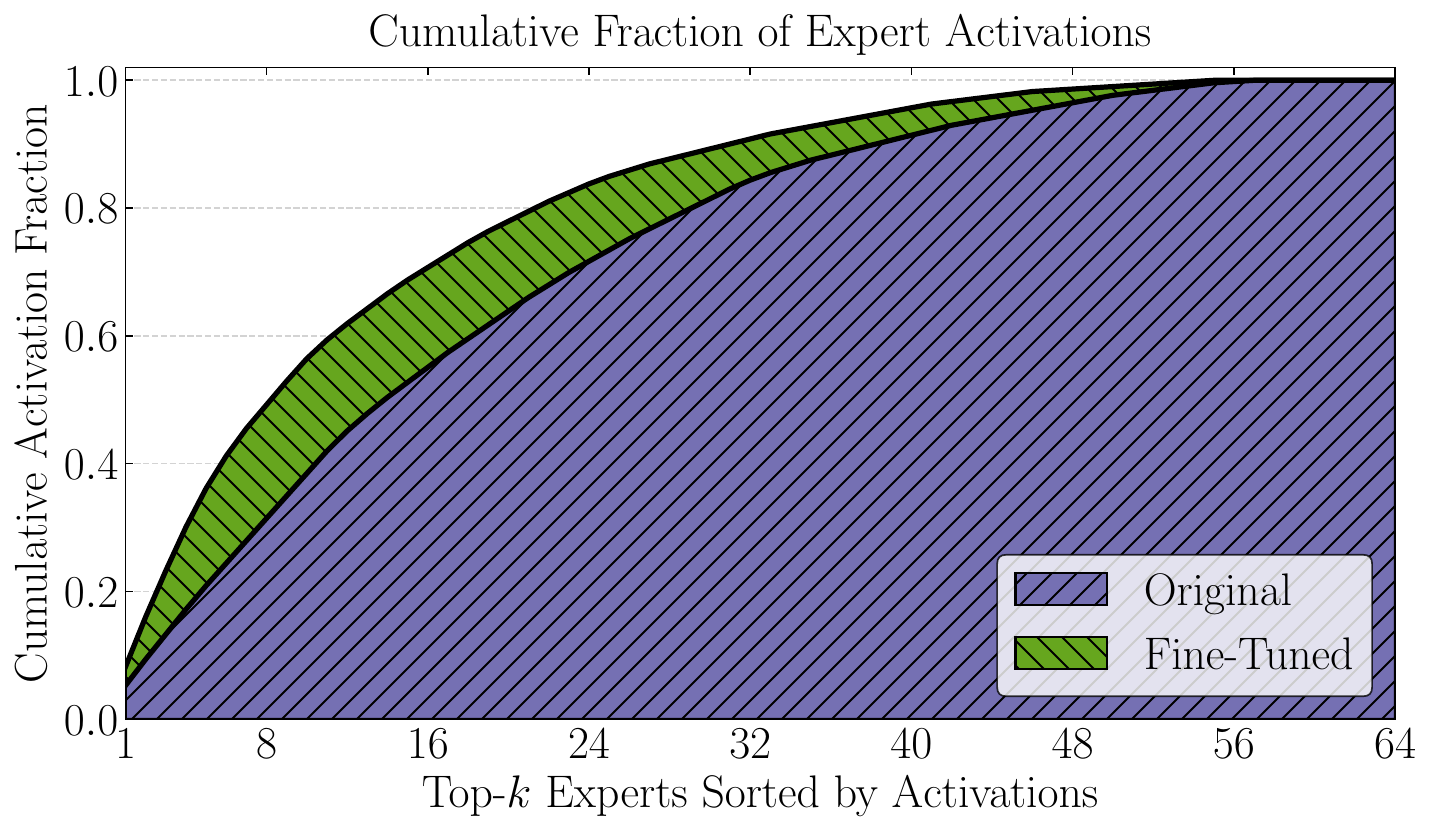}
    \caption{Experts in OLMoE exhibits only \emph{weak} concentration within a layer. Activations are distributed across many experts and the most used experts account for a small fraction of total activations. Fine-tuning increases the concentration of activations.}
    \label{fig:olmoe_concentration}
  \end{subfigure}

  \caption{OLMoE transfer behavior and routing concentration before vs.\ after fine-tuning.}
  \label{fig:olmoe_side_by_side}
  \vspace{-3mm}
\end{figure}
\vspace{0.8\baselineskip}
\paragraph{Expert Specialization.} While standard load balancing objectives spread the probability mass of expert activations across many experts, empirical evidence suggests that MoE routing is not arbitrary. Within a given sequence, the router typically exhibits a consistent but weak preference for a subset of experts, indicating a degree of sequence-level specialization \cite{liang2025modelssuitexpertoffloading, jaiswal2025findingfantasticexpertsmoes, wang2025buddymoeexploitingexpertredundancy}. This suggests that some experts contribute disproportionately to a sequence’s computation for a given prompt, even if the router still routes a nontrivial fraction of tokens elsewhere. However, in pretrained models this preference is too marginal to yield large gains from naïve caching. Even if we keep the ``most preferred'' experts in GPU VRAM, they may account for only a modest fraction of the total activations within a sequence, so cache hit rates remain limited. For example, with OLMoE we observe that, on average, the top $8$ experts by activation within a sequence account for only $\sim31\%$ of expert activations (Figure \ref{fig:olmoe_concentration}). Even though this is still greater than the expected activation rate of $12.5\%$ from all experts being activated equally, it still leaves substantial routing mass on experts that have been offloaded into CPU DRAM. Ultimately, the practical value of these preferred experts in offloading systems is paltry. 
\vspace{0.8\baselineskip}
\paragraph{Fine-Tuning to Achieve Cache-Friendliness.} Nevertheless, the observation of preferred experts motivates an important question: to what extent can we increase the concentration of expert usage within a sequence to improve expert-cache hit rates? Specifically, since the model already has a set of preferred experts per sequence, tilting the model's routing decisions towards these experts could amplify its natural per-sequence preferences. By tilting routing towards a more consistent set of experts within each sequence, we increase routing locality and make VRAM caching substantially more effective by reducing cache churn.
\\ \\
A few considerations are critical for making this approach work in practice. First, we must design an auxiliary objective that encourages the router to reuse a smaller set of experts within a sequence, increasing routing locality and reducing the number of transfers. At the same time, fine-tuning must preserve the base model’s quality. A key failure mode that would impair the model's expressiveness is router collapse where the same subset of experts become the preferred experts globally. Here, the remaining experts would not contribute to the model's predictions, degrading the fine-tuned model's integrity. The desired behavior is sequence-specific skew, where expert preferences should be persistent over the course of decoding for a given input but should vary across inputs. Finally, once such per-sequence structure is induced, we need a reliable way to predict the preferred experts before decoding begins. Accurate prediction enables proactive caching of these experts in GPU memory, making the cache substantially more effective because the same experts are likely to remain useful throughout generation. Thus, we have the following desiderata for \AlgName:
\begin{itemize}[leftmargin=*,itemsep=0pt,topsep=0pt]
    \item \textbf{Auxiliary Loss for Routing Locality}: A practical fine-tuning procedure for offloading needs an objective that penalizes too broad expert utilization per sequence.
    \item \textbf{Global Expert Usage Diversity}: Expert usage should remain diverse across sequences to preserve the model's performance and prevent expert starvation. 
    \item \textbf{Expert Activation Prediction}: Once the model has been fine-tuned and subsequently deployed on a memory-constrained device, there must be a mechanism for predicting per-sequence expert activations based on the prompt.
\end{itemize}

\begin{figure*}[t]
  \centering
  \includegraphics[width=0.94\textwidth]{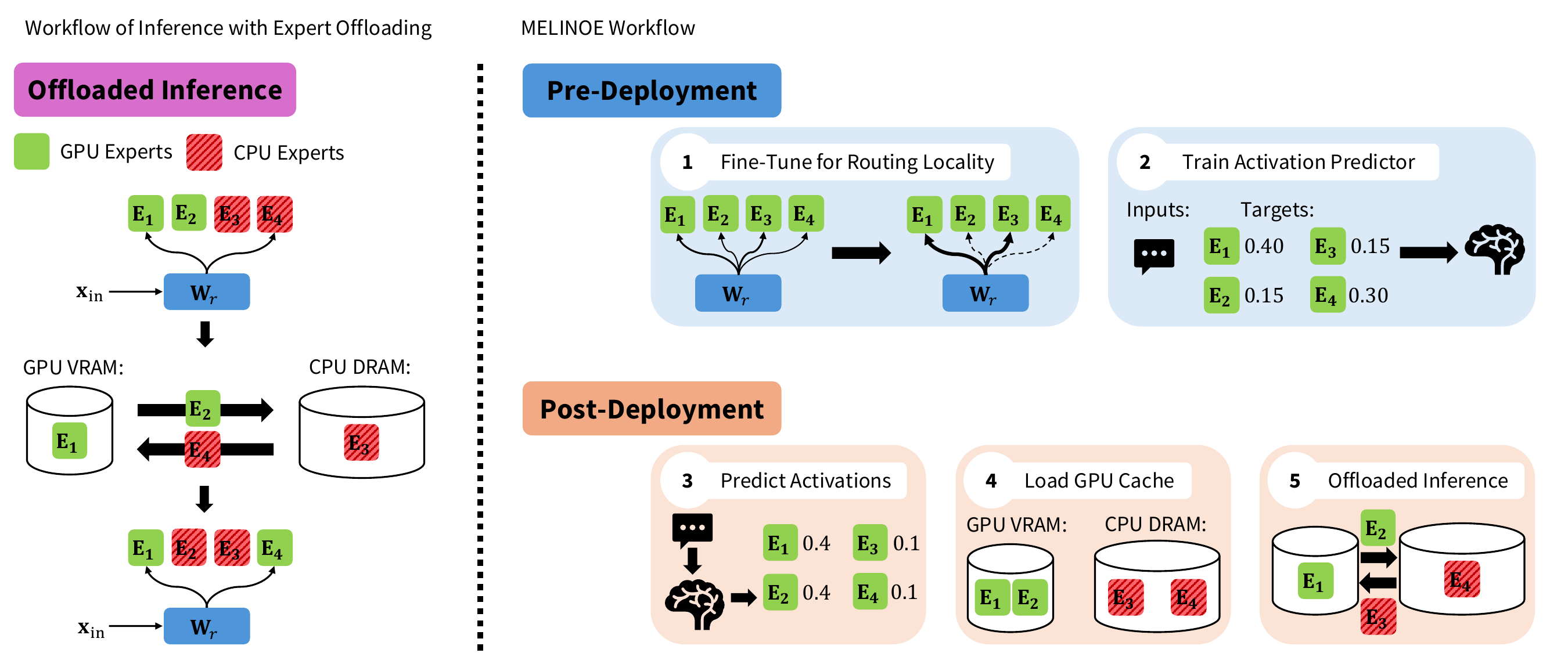}
  \caption{\textbf{Overview of \AlgName.} Pre-deployment: fine-tune the model for per-sequence routing locality and train an activation predictor. Post-deployment: predict likely experts, preload a GPU-resident cache, and run offloaded inference with fewer CPU-GPU transfers.}
  \label{fig:banner_inference_procedure}
\end{figure*}

\section{Method}
\label{sec:method}
We present \AlgName, a practical expert-offloading recipe that reduces end-to-end inference latency. We structure the method into two stages, namely pre- and post-deployment, because the resources available to a model provider differ fundamentally before and after releasing a model to a memory-constrained device. In the pre-deployment stage, the provider can access the full-precision MoE weights as well as fine-tuning data. We leverage this setting to $\textit{(i)}$ fine-tune the MoE with an auxiliary objective that increases per-sequence routing locality (Section \ref{sec:finetune}) and $\textit{(ii)}$ train an activation predictor that anticipates which experts will be most useful for a given input (Section \ref{sec:predictor}). In the post-deployment stage, the model is executed under tight VRAM budgets. Here, we use the trained predictor to proactively pre-load a GPU-resident expert cache before decoding begins for every sequence. Once these experts have been prefetched, we perform standard offloaded inference with substantially fewer transfers during generation (Section \ref{sec:post}).

\subsection{Pre-Deployment Stage}
\subsubsection{MoE Fine-Tuning Procedure}
\label{sec:finetune}
\AlgName\ hinges on the following auxiliary fine-tuning objectives that we use to reshape routing behavior. We simplify notation for the purpose of improved readability. 
\vspace{0.8\baselineskip}
\paragraph{Cache Simulation Loss $\mathcal{L}_{cs}$.} The first auxiliary objective is a cache simulation loss that directly penalizes routing patterns likely to induce expert transfers under a memory budget. We denote the router distribution for token $t \in [1, \ldots, T]$ at layer $\ell \in [1, \ldots, L]$ as $\mathbf{p}^{(\ell, t)} \in \mathbb {R}^{E}$. Since MoE inference routes each token to exactly $K$ experts, we define a binary request vector $\mathbf{r}^{(\ell, t)} \in \mathbb [0,1]^E$ by selecting the Top-$K$ entries of $\mathbf{p}^{(\ell, t)}$. We additionally define a soft cache state for token $t$ at layer $\ell$ as $\mathbf{c}^{(\ell, t)} \in \mathbb{R}_{\geq0}^E$ and initialize each $\mathbf{c}^{(\ell, 0)}$ as $\mathbf{0}$. We update this cache state using an exponentially-decayed history of past requests to approximate a recency-weighted cache. Let $\gamma \in \left[0, 1\right]$ be a decay factor and $C$ be the cache capacity. We first compute an un-normalized update according to the following rule 
\begin{equation}
\mathbf{c}^{(\ell, t+1)} \; = \; \gamma \mathbf{c}^{(\ell, t)} + \mathbf{r}^{(\ell, t)} \nn
\end{equation}
During an initial ``cache fill'' phase, we allow $\|\mathbf{c}^{(\ell, t)} \|_1$ to grow until it reaches $C$. After this point, we normalize $\mathbf{c}^{(\ell, t)}$ for subsequent tokens so that $\|\mathbf{c}^{(\ell, t)} \|_1 = C$ is preserved
\begin{equation}
\mathbf{c}^{(\ell, t+1)} \; = \; \frac{\gamma Z^{(t)}\mathbf{c}^{(\ell, t)} + \mathbf{r}^{(\ell, t)}}{Z^{(t+1)}}, \ Z^{(t+1)} \; = \; \gamma Z^{(t)} + \frac{K}{C} \nn
\end{equation}
where $Z^{(t)}$ is a scalar normalizer that maintains $\|\mathbf{c}^{(\ell, t)} \|_1 = C$ and $Z^{(1)}$ is initialized to $1$. Alternatively, one can also initialize using
a uniform vector with $\|\mathbf{c}^{(\ell, 1)} \|_1 = C$ to avoid the initial cache fill phase. In experiments, we set $\gamma=0.9$, but provide ablation studies for the impact of $\gamma$ and $C$ on throughput in Appendix \ref{subsec:soft_cache} and \ref{subsec:decay}. As $\gamma$ controls how long previous routing decisions persist in the cache, smaller values such as $\gamma = 0$ make the cache more reactive and closer to a Least Recently Used (LRU) cache. In contrast, larger values of $\gamma$ such as $\gamma = 1$ allow previous routing decisions to remain relevant over long horizons in a manner similar to a Least Frequently Used (LFU) cache.
\\ \\
The cache simulation loss can then be calculated as
\begin{equation}
\mathcal{L}_{cs} \; = \; \frac{1}{LT}\sum\limits_{\ell=1}^{L} \sum\limits_{t=1}^{T}\hspace{-2.5em}\underbrace{\sum\limits_{i=1}^{E} \mathbf{r}_{i}^{(\ell, t)}\left(1-\mathbf{c}_{i}^{(\ell, t)}\right)}_{\text{\normalsize cache miss proxy at token $t$, layer $\ell$}}
\end{equation} 
since a value of $1$ for $\mathbf{r}_{i}^{(\ell, t)}$ means the router has selected expert $\mathbf{E}_i^{(\ell)}$ while a higher value of $1-\mathbf{c}_{i}^{(\ell, t)}$ means that it is less likely that $\mathbf{E}_i^{(\ell)}$ is already in the GPU-resident cache. 
\vspace{0.8\baselineskip}
\paragraph{Rank Matching Loss $\mathcal{L}_{rm}$.} The second auxiliary objective is a rank matching loss that aligns the relative routing preferences of the fine-tuned model with the preferences of the original base model. We change the notation slightly from the previous paragraph and refer to the base model's router distribution for token $t$ at layer $\ell$ as $\mathbf{p}_b^{(\ell, t)}$ and the fine-tuned model's router distribution as $\mathbf{p}_f^{(\ell, t)}$. We count ``mistakes'' in the fine-tuned model's ordering as 
\begin{equation}
m^{(\ell, t)}=\sum\limits_{i, j \in \left[E\right]}\mathbb{I}\!\left\{\mathbf{p}_{b, i}^{(\ell, t)}>\mathbf{p}_{b, j}^{(\ell, t)}\right\}
\left[\rho-\left(\mathbf{p}_{f, i}^{(\ell, t)}-\mathbf{p}_{f, j}^{(\ell, t)}\right)\right]_+ \nn
\end{equation}
where the operation $\left[a\right]_+ := \max(0, a)$. In simple terms, if the base model is more likely to route to an expert $\mathbf{E}_i^{(\ell)}$ over $\mathbf{E}_j^{(\ell)}$, then the fine-tuned model should also prefer $\mathbf{E}_i^{(\ell)}$ over $\mathbf{E}_j^{(\ell)}$ up to some margin $\rho$. This way, the amount that the fine-tuned model prefers $\mathbf{E}_i^{(\ell)}$ over $\mathbf{E}_j^{(\ell)}$ is only nominally factored into the loss, but the underlying ordering still matters. The rank matching loss is calculated as 
\begin{equation}
\mathcal{L}_{rm} \; = \; \frac{1}{LT} \sum\limits_{\ell=1}^{L} \sum\limits_{t=1}^{T} m^{(\ell, t)}
\end{equation}
From this, we have the full loss calculation for fine-tuning
\begin{equation}
\mathcal{L} \; = \; \mathcal{L}_{nll} \; + \;  \lambda_{cs}\mathcal{L}_{cs} \; + \;  \lambda_{rm}\mathcal{L}_{rm}
\end{equation}
where $\mathcal{L}_{nll}$ is the Negative Log-Likelihood loss used in standard language modeling applications. These loss functions jointly reshape routing in a way that is aligned with memory-constrained inference. $\mathcal{L}_{cs}$ promotes within-sequence routing locality by directly penalizing excessive cache transfers, while $\mathcal{L}_{rm}$ prevents router collapse by encouraging the fine-tuned router to preserve the base model’s relative expert preferences. To further highlight the utility of these specific loss functions, we provide brief theoretical justifications for both loss functions in Appendix \ref{sec:dc}. Because we aim for fine-tuning to be memory-efficient, we update only the router parameters, specifically the router weights and gate projection, and apply low-rank adaptation (LoRA) \cite{hu2022lora} to the MLP up and down projections. 
\subsubsection{Expert Activation Predictor}
\label{sec:predictor}
In prior offloading systems, learning an expert activation predictor often provides only marginal benefit because routing is noisy, so prefetched experts are evicted before they can be reused. In \AlgName, however, fine-tuning induces stronger prompt-conditioned structure in routing, as activations are more consistent per prompt. Motivated by contextual sparsity \cite{10.5555/3618408.3619327, hou2025instructionfollowing}, we learn a prompt-conditioned predictor of expert preferences, $\Psi$. 
\\ \\ 
The activation predictor is trained on a dataset generated in the following fashion. For each prompt $\mathbf{q}$, we compute a fixed-dimensional representation using an embedding model $\Psi_{\text{EMB}}$ such that $\Psi_{\text{EMB}}(\mathbf{q}) \in \mathbb{R}^{d_{\text{EMB}}}$. To construct targets for the activation predictor, we generate a response from the MoE and record router probabilities, $\mathbf{p}^{(\ell, t)} \in \mathbb{R}^E$, for each token and layer. The supervised target for the predictor is the per-layer average router probability vector defined as 
\begin{equation}
y^{(\ell)}(\mathbf{q}) \; = \; \frac{1}{T} \sum\limits_{t=1}^{T}  \mathbf{p}^{(\ell, t)}, \ Y(\q) = \left[y^{(1)}(\q)\cdots y^{(\Ly)}(\q)\right] \nn
\end{equation}
where $Y(\mathbf{q}) \in \mathbb{R}^{L \times E}$. This yields a dataset of pairs $\{\Psi_{\text{EMB}}(\mathbf{q}_n), Y(\mathbf{q}_n)\}_{n=1}^{N}$. We train a lightweight two-layer MLP $\Psi_{\text{MLP}}: \mathbb{R}^{d_{\text{EMB}}} \rightarrow \mathbb{R}^{L \times E}$ to predict $\widehat{Y}(\mathbf{q})$ whose rows $[\widehat{Y}(\mathbf{q})]_{\ell}$ estimate the layerwise expert preference scores. We normalize the ground truth targets and train $\Psi_{\text{MLP}}$ by minimizing the KL divergence between the normalized target distribution $[Y(\mathbf{q})]_{\ell}$ and the predicted distribution by applying a row-wise softmax to $[\widehat{Y}(\mathbf{q})]_{\ell}$. Because \AlgName\ reduces routing noise by amplifying prompt-specific expert preferences, even a relatively small $\Psi_\text{EMB}$ suffices to produce a lightweight yet accurate predictor. Thus, we enable accurate prefetching, with minimal prefetching latency, and consequently reduce CPU-GPU transfers in the process. 
\subsection{Post-Deployment Stage}
\label{sec:post}
In the post-deployment stage, the fine-tuned MoE and the activation predictor are executed on a memory-constrained device. Given a prompt $\mathbf{q}$, we predict per-layer expert activations by calculating the following using the trained MLP
\begin{equation}
\Psi_{\text{MLP}}(\Psi_{\text{EMB}}(\mathbf{q})) = \widehat{Y}(\mathbf{q})
\end{equation}
For each layer $\ell \in [1, \ldots, L]$, we form a prefetch set $c^{(\ell, 1)} = \text{Top-}C([\widehat{Y}(\mathbf{q})]_{\ell})$ where $C$ is the cache capacity. We proactively load the corresponding experts into the GPU-resident cache before generation begins. To increase effective cache capacity, all expert weights are maintained in \texttt{HQQ INT4}, allowing more experts to remain resident in limited GPU memory. Experts that are not resident in GPU VRAM remain in CPU DRAM and are stored in pinned memory to accelerate transfers from CPU to GPU. Transfers are additionally non-blocking, which ensures asynchronous transfers of experts and minimizes PCIe overhead. These offloaded experts are fetched on demand when selected by the router, evicting cached experts as dictated by the cache policy. Overall, \AlgName\ localizes routing through fine-tuning, predicts the resulting per-sequence locality, and exploits it via proactive prefetching and quantized cache residency to reduce cache misses, transfers, and transfer-induced stalls during generation. In the following section, we demonstrate the effectiveness of these various design elements.

\begin{figure*}[t]
  \centering
  \includegraphics[width=\textwidth]{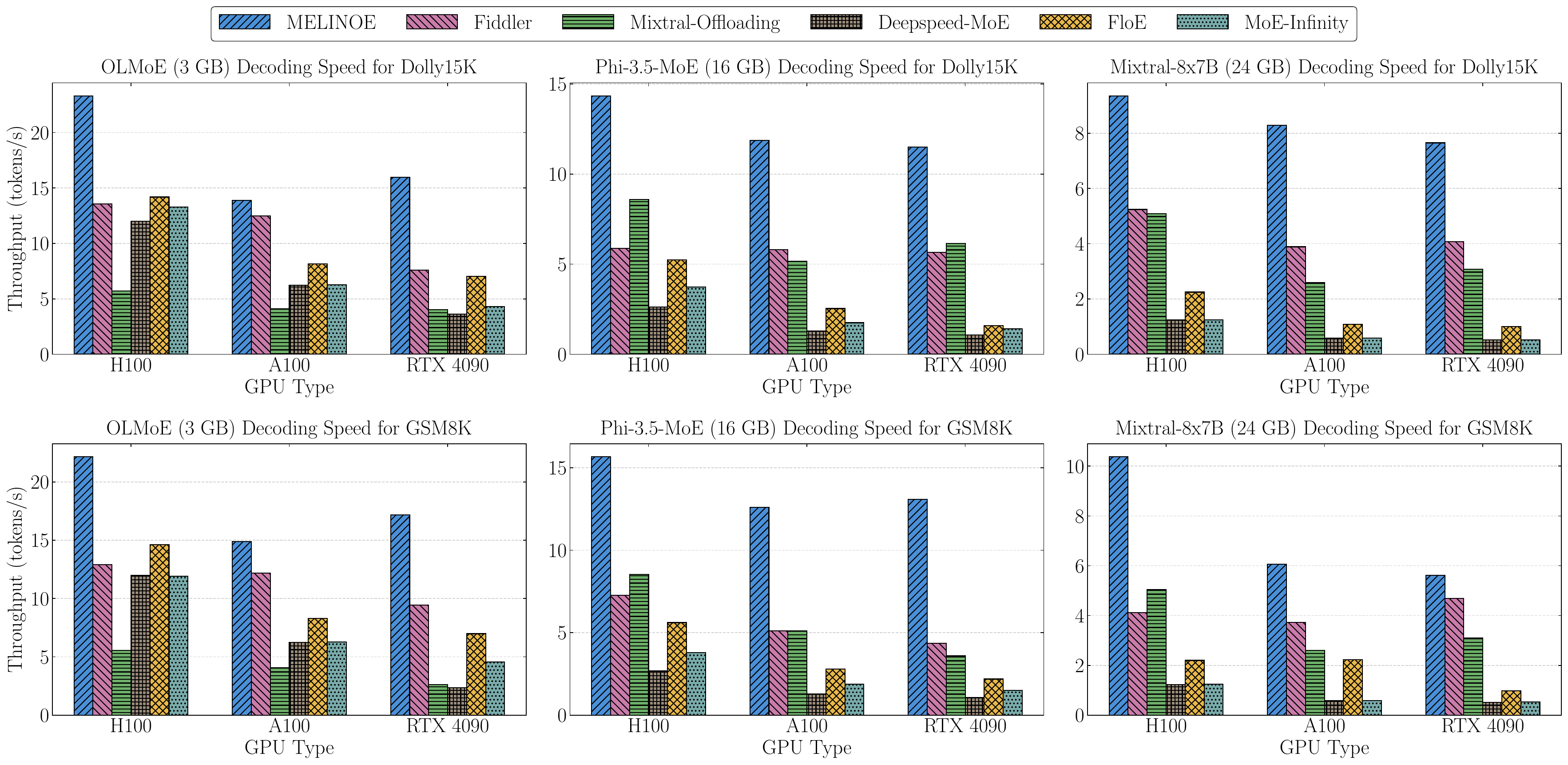}
  \caption{Throughput comparison of \AlgName\ against prior baselines across model/dataset/GPU configurations.}
  \label{fig:baseline_comparison}
\end{figure*}

\section{Results}
\label{sec:results}

\subsection{Experimental Setup}
\label{sec:setup}

\paragraph{Models and Datasets.}
We evaluate \AlgName\ on three MoE backbones that span size and granularity: \textbf{OLMoE} ($7$B params.), \textbf{Phi-3.5-MoE} ($42$B params.), and \textbf{Mixtral-8x7B} ($47$B params.). We train the activation predictor using fixed-dimensional representations from \textbf{BGE-Base-EN-v1.5} ($109$M params.) \cite{10.1145/3626772.3657878} with embedding dimension $768$. We fine-tune and evaluate across two complementary workloads. We fine-tune on \textbf{Dolly15K} \cite{DatabricksBlog2023DollyV2} (general instruction-following dataset) and \textbf{GSM8K} \cite{cobbe2021trainingverifierssolvemath} (math reasoning with longer generations) and benchmark on held-out evaluation splits.
\vspace{0.8\baselineskip}
\paragraph{Fine-Tuning Hyperparameters.} Across all models, \AlgName\ updates the router weights and gate projection layers and applies LoRA rank $r=32$ to the MLP up and down projections. Remaining weights are fixed at their pretrained initialization. For the cache-simulation loss $\mathcal{L}_{cs}$, we adopt a deliberately restrictive cache capacity to emphasize the intended deployment regime, setting the simulated cache budget to $C = \frac{E}{4}$ (e.g. $C=16$ for OLMoE). We fix the cache decay parameter to $\gamma = 0.9$ and the rank matching margin to $\rho=0.1$ throughout. Optimizer settings and fine-tuning hyperparameters are provided in Appendix \ref{subsec:opt}. 
\vspace{0.8\baselineskip}
\paragraph{Hardware and Inference Configuration.} We validate performance gains across multiple GPU types: \textbf{H100 (80GB VRAM)}, \textbf{A100 (40GB VRAM)}, and \textbf{RTX 4090 (24GB VRAM)}. To ensure that comparisons reflect realistic memory-constrained deployments, even on larger-memory accelerators, we artificially cap per-process GPU memory using PyTorch limits. We allocate 3GB for OLMoE, 16GB for Phi-3.5-MoE, and 24GB for Mixtral-8x7B. The expert cache uses an LFU eviction policy. Sensitivity to cache size and alternative budgets is deferred to Appendix \ref{subsec:cache_size}.
\subsection{Main Results}
\label{sec:main}
\begin{table}[t]
\centering
\footnotesize
\setlength{\tabcolsep}{2pt}
\renewcommand{\arraystretch}{1.12}
\caption{Downstream output quality across baselines. ROUGE-L is reported on Dolly15K. Accuracy is reported on GSM8K.}
\label{tab:quality_both_datasets}

\newcolumntype{Y}{>{\centering\arraybackslash}X}

\begin{tabularx}{\linewidth}{@{}%
>{\raggedright\arraybackslash}p{0.22\linewidth}@{}%
*{6}{Y}@{}}
\specialrule{0.09em}{0.2em}{0.2em}
\rowcolor{gray!12}
& \multicolumn{3}{c}{\textbf{Dataset: Dolly15K (ROUGE-L)}}
& \multicolumn{3}{c}{\textbf{Dataset: GSM8K (Accuracy \%)}} \\
\specialrule{0.06em}{0.15em}{0.15em}
\rowcolor{gray!6}
\textbf{Method} & \textbf{OLMoE} & \textbf{Phi-3.5-MoE} & \textbf{Mixtral-8x7B}
                & \textbf{OLMoE} & \textbf{Phi-3.5-MoE} & \textbf{Mixtral-8x7B} \\
\specialrule{0.06em}{0.2em}{0.2em}

\rowcolor{blue!8}
\textit{Base Model} &
\textit{0.1851} & \textit{0.2067} & \textit{0.2159} &
\textit{79.21} & \textit{55.45} & \textit{77.23} \\

\AlgName &
\textbf{0.2486} & \textbf{0.2270} & \textbf{0.2361} &
\textbf{80.20} & \textbf{63.37} & \textbf{79.21} \\

Fiddler &
0.1851 & 0.2067 & 0.2159 &
79.21 & 55.45 & 77.23 \\

Mixtral-Offloading &
0.1734 & 0.2025 & 0.2086 &
72.28 & 51.49 & 61.39 \\

DeepSpeed-MoE &
0.1851 & 0.2067 & 0.2159 &
79.21 & 55.45 & 77.23 \\

FLoE &
0.1775 & 0.1884 & 0.2212 &
63.34 & 53.47 & 60.40 \\

MoE-Infinity &
0.1851 & 0.2067 & 0.2159 &
79.21 & 55.45 & 77.23 \\

\specialrule{0.09em}{0.2em}{0.2em}
\end{tabularx}
\vspace{-2mm}
\end{table}
We validate the performance of \AlgName\ against five prior baselines: Fiddler \cite{kamahori2025fiddler}, Mixtral-Offloading \cite{eliseev2023fastinferencemixtureofexpertslanguage}, Deepspeed-MoE \cite{ae449111733a42c5980594f9133812c8}, FLoE \cite{zhou2025floe}, and MoE-Infinity \cite{xue2025moeinfinityefficientmoeinference}. Across every configuration, \AlgName\ significantly improves throughput relative to the baseline methods (Figure \ref{fig:baseline_comparison}). Using OLMoE, \AlgName\ achieves $22.16$-$23.32$ tokens/s on the H100 setup and $15.99$-$17.17$ tokens/s on the RTX 4090 setup. In comparison, the best competing baseline achieves only $14.19$-$14.62$ tokens/s on the H100 setup (FLoE) and $7.61$-$9.44$ tokens/s on the RTX 4090 setup (Fiddler). These improvements persist with the larger architectures. Phi-3.5-MoE reaches $14.34$-$15.67$ tokens/s on the A100 setup whereas the next closest baselines (Fiddler and Mixtral-Offloading) only attain $5.11$-$5.81$ tokens/s. Ultimately, MELINOE improves over the best competing baseline by $1.2$-$3\times$ and yields substantially larger gains against baselines that incur frequent weight transfers. For instance, \AlgName\ achieves a $14.7\times$ throughput gain over Deepspeed-MoE when using Mixtral-8x7B on the RTX 4090 setup, showing considerable improvements on resource-constrained hardware configurations. 
\\ \\ 
Crucially, \AlgName's throughput gains do not come at the expense of task performance (Table \ref{tab:quality_both_datasets}). Note that the Fiddler, DeepSpeed-MoE, and MoE-Infinity baselines do not change the base model weights and therefore have the same performance as the base model. On Dolly15K, \AlgName\ achieves the best ROUGE-L with OLMoE ($0.2486$), Phi-3.5-MoE ($0.2270$), and Mixtral-8x7B ($0.2361$). On GSM8K, \AlgName\ again attains the highest accuracy across all three architectures, with the most pronounced improvement occurring with Mixtral-8x7B where \AlgName\ outperforms the next closest baseline by $4.35\%$. Tilting routing towards more persistent per-sequence experts can maintain or even improve quality on the downstream task.
\subsection{Ablation Studies and Additional Analysis}
\label{sec:ablations}

\paragraph{Relative Impact of Fine-Tuning vs. Prefetching.}
\begin{wrapfigure}{r}{0.50\linewidth}
  \vspace{-15mm}
  \centering

  \includegraphics[width=\linewidth]{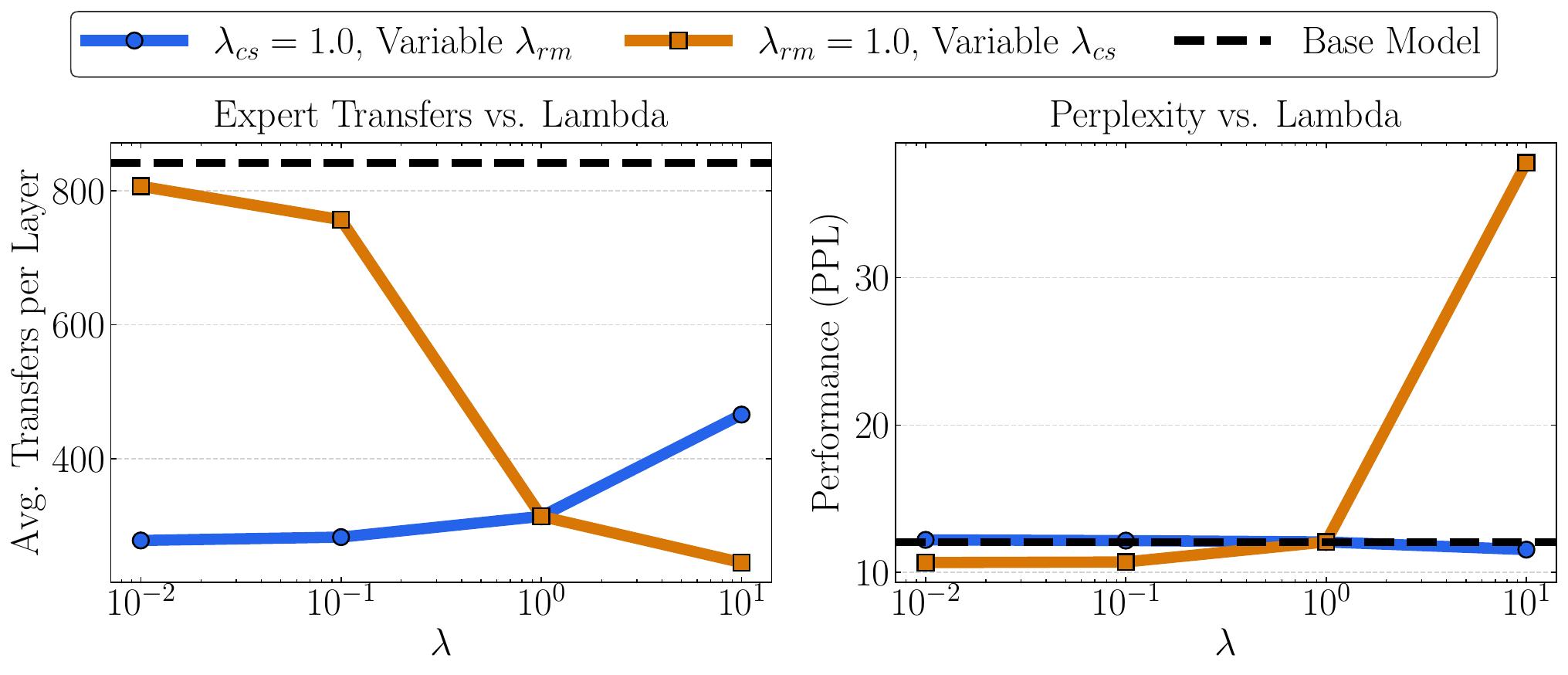}
  \captionof{figure}{Impact of varying $\lambda_{cs}$ and $\lambda_{rm}$ on number of expert transfers and model performance (OLMoE, 64 output tokens).}
  \label{fig:lambda_ablation}

  \vspace{2mm}

  \includegraphics[width=\linewidth]{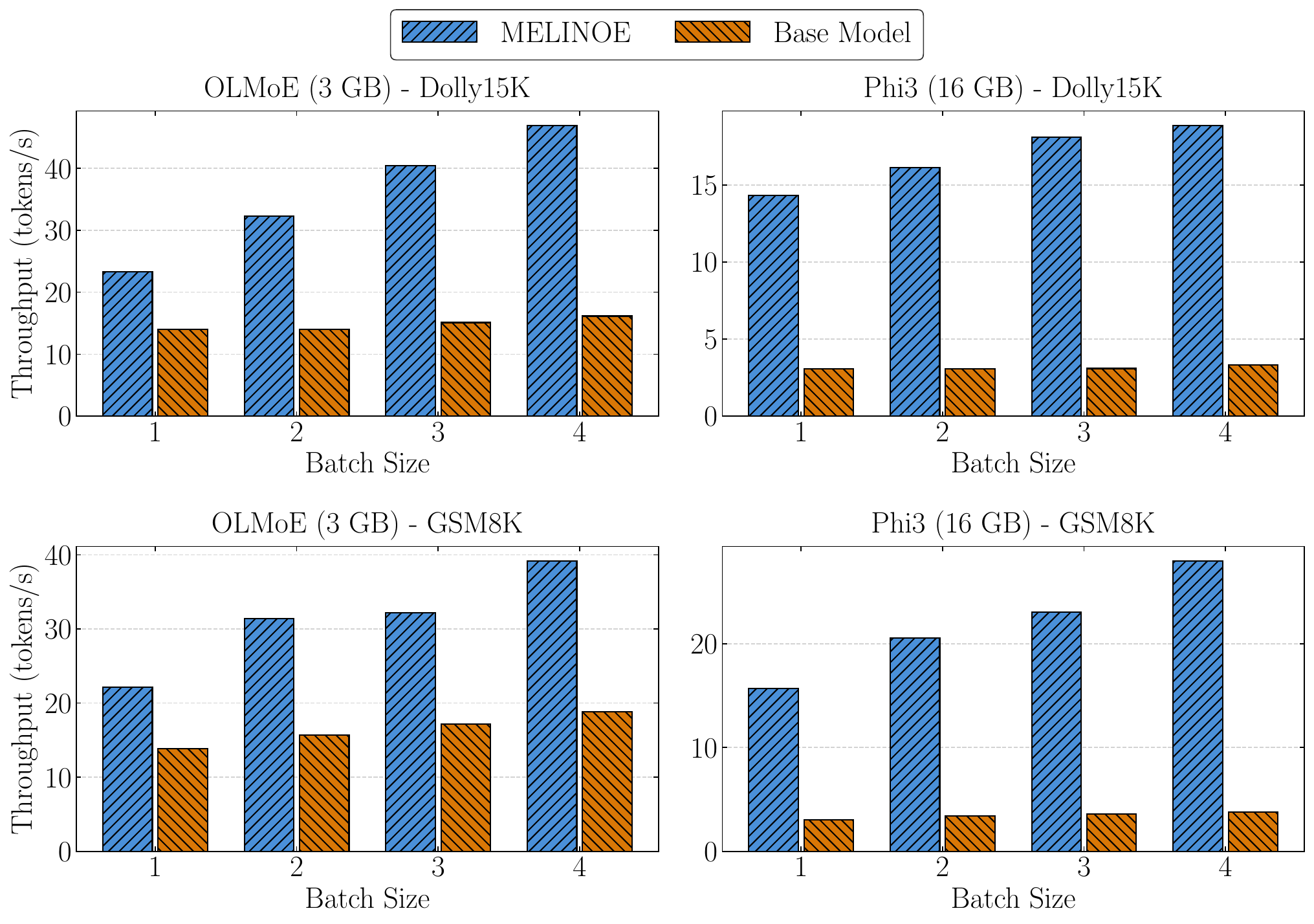}
  \captionof{figure}{Throughput of \AlgName\ at various batch sizes relative to the base model with limited GPU VRAM.}
  \label{fig:batch_size}
  \vspace{-8mm}
\end{wrapfigure}
First, we aim to disambiguate the influence of fine-tuning and prefetching on the performance of \AlgName. Table \ref{tab:prefetch_ablation_both} quantifies the impact of these disparate components for OLMoE and Mixtral-8x7B on Dolly15K and GSM8K. We conclude that the primary factor in throughput improvements is fine-tuning, which can substantially reduce the number of CPU-GPU transfers. In OLMoE, fine-tuning the model can result in $3\times$ fewer transfers relative to the base model. For Mixtral-8x7B, this reduction in transfer counts is especially valuable as transfers are slow in coarse-grained MoE architectures where experts themselves are larger. Even with PCIe $5$ x$16$, a single expert transfer for Mixtral-8x7B without quantization can take $5$-$6$ ms. Prefetching provides supplementary benefits, but the improvements are marginal relative to the gains achieved through fine-tuning alone. Nonetheless, prefetching itself takes roughly $0.05$ seconds but can still reduce end-to-end latency by up to $0.8$ seconds. 
\begin{table}[!t]
\centering
\footnotesize
\setlength{\tabcolsep}{2.5pt}
\renewcommand{\arraystretch}{1.12}
\caption{Impact of fine-tuning and prefetching (64 output tokens). Each entry reports throughput (tokens/s) with average transfers per layer in parentheses (Tx/L).}
\label{tab:prefetch_ablation_both}

\newcolumntype{Y}{>{\centering\arraybackslash}X}

\begin{tabularx}{\linewidth}{@{}>{\raggedright\arraybackslash}p{0.34\linewidth}@{} *{4}{Y}@{}}
\specialrule{0.09em}{0.2em}{0.2em}
\rowcolor{gray!12}
&
\multicolumn{2}{c}{\textbf{Dataset: Dolly15K}} &
\multicolumn{2}{c}{\textbf{Dataset: GSM8K}} \\
\specialrule{0.06em}{0.15em}{0.15em}
\rowcolor{gray!6}
\textbf{Setting} &
\makecell{\textbf{OLMoE}\\\textbf{($C=16$)}} &
\makecell{\textbf{Mixtral-8x7B}\\\textbf{($C=5$)}} &
\makecell{\textbf{OLMoE}\\\textbf{($C=16$)}} &
\makecell{\textbf{Mixtral-8x7B}\\\textbf{($C=5$)}} \\
\specialrule{0.06em}{0.2em}{0.2em}

\textbf{Base Model} &
$15.15\,(727)$ &
$3.37\,(105)$ &
$15.10\,(771)$ &
$3.59\,(100)$ \\

\textbf{Fine-Tuned Model} &
$23.35\,(240)$ &
$8.36\,(48)$ &
$24.88\,(255)$ &
$7.82\,(52)$ \\

\textbf{Fine-Tuned Model + Prefetch} &
$\mathbf{24.10\,(229)}$ &
$\mathbf{8.98\,(46)}$ &
$\mathbf{25.00\,(250)}$ &
$\mathbf{8.60\,(48)}$ \\

\specialrule{0.09em}{0.2em}{0.2em}
\end{tabularx}
\end{table}
\vspace{0.8\baselineskip}
\paragraph{Contribution of Loss Functions on Performance.}
Our fine-tuning objective uses two weighting coefficients, $\lambda_{cs}$ and $\lambda_{rm}$, to control the relative influence of the auxiliary losses. Figure \ref{fig:lambda_ablation} exhibits the significance of these terms on the resulting fine-tuned model's behavior. When holding $\lambda_{cs}=1.0$ and varying $\lambda_{rm}$, perplexity is largely stable for moderate values $\lambda_{rm}\in\{0.01,0.1,1.0\}$  while transfers rise slightly from $278$ to $314$ transfers per layer. When holding $\lambda_{rm}=1.0$ and varying $\lambda_{cs}$, increasing $\lambda_{cs}$ predictably reduces transfers, confirming that the transfer penalty stabilizes expert residency. However, pushing $\lambda_{cs}$ too high sharply harms the fine-tuned model's quality, indicating that aggressively minimizing transfers can over-constrain routing and degrade performance. Figure \ref{fig:lambda_ablation} highlights that moderate $\lambda_{cs}$ and $\lambda_{rm}$ is an effective balancing point between the objectives, simultaneously achieving large expert transfer reductions while minimally impacting perplexity. 
\vspace{0.8\baselineskip}
\paragraph{Effect of Batch Size.}
We study the impact of batched decoding in Figure~\ref{fig:batch_size}. We do not include prior offloading baselines as they do not specifically test regimes with batch size $>1$. For \AlgName, the activation predictor pools the most likely experts across all sequences in the batch, while all other aspects of the caching policy remain identical to the batch size $=1$ setting. \AlgName\ yields significant throughput gains over the base model and maintains increasing throughput as batch size grows. The relative speedups modestly diminish at larger batch sizes as sequence diversity increases the union of requested experts and induces additional transfers. Fine-tuning to amplify prompt-specific expert preferences remains valuable for memory-constrained MoE inference in multi-request settings.
\vspace{0.8\baselineskip}
\paragraph{Effect of Output Generation Length.}
A key observation is that fine-tuning does not over-constrain the model and lead to larger performance drops over longer generations. In Table \ref{tab:output_quality}, we present the performance of the fine-tuned model at different generation lengths for all three model architectures. Across various output token lengths, the fine-tuned model remains performant. This suggests that \AlgName\ preserves output quality as generation length increases, rather than trade long-horizon stability for short-context gains. Importantly, this supports the view that the cache simulation loss remains viable for variable-length generations.
\vspace{0.8\baselineskip}
\paragraph{Coupling Fine-Tuning with Previous Baselines.} Since the fine-tuning procedure in \AlgName\ is not contingent on specific prefetching, quantization, or sparsity schemes, the fine-tuned model checkpoint can be used as a stand-in for the base model when using prior baselines. Table \ref{tab:baseline_ablation_both} shows that the fine-tuned model checkpoint can improve the throughput of cache-based methods like FLoE and Mixtral-Offloading. Using the same VRAM restrictions as Section \ref{sec:main}, swapping the base MoE model for the fine-tuned version can yield improvements of up to $5.49$ tokens/s. Therefore, the fine-tuning procedure proposed in \AlgName\ can be used to augment prior and future offloading baselines. 
\begin{table}[t]
\centering
\footnotesize
\setlength{\tabcolsep}{3.5pt}
\renewcommand{\arraystretch}{1.15}
\caption{Fine-tuned model perplexity across generation lengths. The quality of the fine-tuned model does not degrade with long output horizons.}
\label{tab:output_quality}

\newcolumntype{Y}{>{\centering\arraybackslash}X}

\begin{tabularx}{\linewidth}{@{}>{\raggedright\arraybackslash}p{0.32\linewidth} *{3}{Y}@{}}
\specialrule{0.09em}{0.2em}{0.2em}
\rowcolor{gray!12}
\textbf{Output Length} &
\textbf{OLMoE} &
\textbf{Phi-3.5-MoE} &
\textbf{Mixtral-8x7B} \\
\specialrule{0.06em}{0.2em}{0.2em}
\textbf{64 Tokens} & $13.11$ & $5.56$ & $6.12$ \\
\textbf{128 Tokens} & $10.42$ & $4.81$ & $4.80$ \\
\textbf{256 Tokens} & $9.50$ & $4.18$ & $4.14$ \\
\textbf{512 Tokens} & $10.56$ & $3.95$ & $3.92$ \\
\textbf{1024 Tokens} & $9.86$ & $4.38$ & $4.29$ \\
\specialrule{0.09em}{0.2em}{0.2em}
\end{tabularx}
\end{table}

\begin{table}[t]
\centering
\footnotesize
\setlength{\tabcolsep}{2.2pt}
\renewcommand{\arraystretch}{1.12}
\caption{Impact of fine-tuning on prior baselines. Each entry reports throughput (tokens/s).}
\label{tab:baseline_ablation_both}

\newcolumntype{Y}{>{\centering\arraybackslash}X}

\begin{tabularx}{\linewidth}{@{}>{\raggedright\arraybackslash}p{0.30\linewidth}@{} *{4}{Y}@{}}
\specialrule{0.09em}{0.2em}{0.2em}
\rowcolor{gray!12}
&
\multicolumn{2}{c}{\textbf{Dataset: Dolly15K}} &
\multicolumn{2}{c}{\textbf{Dataset: GSM8K}} \\
\specialrule{0.06em}{0.15em}{0.15em}
\rowcolor{gray!6}
\textbf{Method} &
\textbf{OLMoE} & \textbf{Phi-3.5-MoE} &
\textbf{OLMoE} & \textbf{Phi-3.5-MoE} \\
\specialrule{0.06em}{0.2em}{0.2em}

\textbf{FLoE} &
$14.19$ & $5.23$ &
$14.62$ & $5.61$ \\

\textbf{\quad + Fine-Tuning} &
$20.11$ & $6.28$ &
$20.24$ & $9.56$ \\

\textbf{Mixtral-Offloading} &
$5.70$ & $8.58$ &
$5.55$ & $8.52$ \\

\textbf{\quad + Fine-Tuning} &
$9.22$ & $8.96$ &
$9.07$ & $8.91$ \\

\specialrule{0.09em}{0.2em}{0.2em}
\end{tabularx}
\vspace{-2mm}
\end{table}
\section{Conclusion}
\label{sec:conclusion}
We present \AlgName, a procedure that makes MoE models more deployment-friendly under tight VRAM budgets. \AlgName\ fine-tunes an MoE model to strongly prefer a small subset of experts on a per-sequence basis. With these stronger and more consistent expert preferences, caching the preferred experts in GPU memory yields substantially fewer expert transfers during decoding, improving throughput without sacrificing output quality on downstream tasks.
\\ \\
We identify three areas for future work. First, we aim to conduct studies with larger-scale fine-tuning on general-purpose corpora. While we evaluate generalization (Appendix \ref{subsec:generalization}), future work should explore whether this approach is valuable for deploying MoE models across diverse tasks. Second, our current design uses the same number of cached experts per layer, whereas layer-wise cache budgets may provide increased flexibility. Finally, for very long generations, dynamically adapting cache sizes over time may be beneficial. Overall, we view \AlgName\ as a step towards deployment-ready MoE models that retain strong quality while operating efficiently under real hardware constraints.
\section*{Acknowledgments}

This work was partially supported by NSF grants CCF 2045694, CCF 2428569, CNS-2112471, CPS-2111751, ONR grant N00014-23-1-2149, and an AI2C Seed grant. This work used Bridges-2 GPU at the Pittsburgh Supercomputing Center through allocations CIS250149 and CIS250011 from the Advanced Cyberinfrastructure Coordination Ecosystem: Services \& Support (ACCESS) program, which is supported by NSF grants \#2138259, \#2138286, \#2138307, \#2137603, and \#2138296.  \citep{access}. We would like to thank Aneesha Sampath, Divyansh Jhunjhunwala, Tim Dettmers, and Jianyu Wang for
providing feedback and contributing to discussions for the project.
\bibliographystyle{plainnat}
\bibliography{refs}
\onecolumn
\appendix
\section*{Appendix Contents}
\label{sec:appendix}
{\color{blue}
\noindent
\textbf{A \hspace{0.5em} Related Works} \dotfill \textbf{15}
\\[0.5em]
\textbf{B \hspace{0.5em} Additional Experimental Details} \dotfill \textbf{16} \\
\indent \hspace{1.5em} B.1 \hspace{0.5em} Model Details \dotfill 16 \\
\indent \hspace{1.5em} B.2 \hspace{0.5em} Optimizer and Loss Function Parameters \dotfill 16 \\
\indent \hspace{1.5em} B.3 \hspace{0.5em} Hardware Configurations \dotfill 117 \\ 
\indent \hspace{1.5em} B.4 \hspace{0.5em} Evaluation Settings \dotfill 17 \\ [0.5em]
\textbf{C \hspace{0.5em} Loss Function Design Justification} \dotfill \textbf{18} \\
\indent \hspace{1.5em} C.1 \hspace{0.5em} Choice of $\mathcal{L}_{cs}$ \dotfill 18 \\ 
\indent \hspace{1.5em} C.2 \hspace{0.5em} Choice of $\mathcal{L}_{rm}$ \dotfill 20 \\ [0.5em]
\textbf{D \hspace{0.5em} Additional Experiments} \dotfill \textbf{22} \\
\indent \hspace{1.5em} D.1 \hspace{0.5em} Out-of-Distribution Generalization Performance of \AlgName \dotfill 22 \\ 
\indent \hspace{1.5em} D.2 \hspace{0.5em} Effect of Output Generation Length on Throughput \dotfill 22 \\ 
\indent \hspace{1.5em} D.3 \hspace{0.5em} Impact of Fine-Tuning on Expert Routing \dotfill 23 \\ 
\indent \hspace{1.5em} D.4 \hspace{0.5em} Ablation on GPU VRAM Budget \dotfill 27 \\ 
\indent \hspace{1.5em} D.5 \hspace{0.5em} Ablation on Quantized Experts \dotfill 27 \\ 
\indent \hspace{1.5em} D.6 \hspace{0.5em} Ablation on Soft Cache Capacity in Loss \dotfill 28 \\
\indent \hspace{1.5em} D.7 \hspace{0.5em} Ablation on Loss Function Decay Factor $\gamma$ \dotfill 28 \\ 
\indent \hspace{1.5em} D.8 \hspace{0.5em} Ablation on Cache Eviction Policy \dotfill 28 \\ 
} 

\newpage 
\section{Related Works}
\label{sec:related}
\paragraph{Mixture-of-Experts Model Architectures.} Mixture-of-Experts (MoE) model architectures were first proposed as an alternative to traditional transformer-based models \cite{shazeer2017}. In MoE models, the Feed Forward Network (FFN) present in each transformer block is separated into multiple FFNs which are sparsely activated on a per-token basis. These sparse activations allow MoE models to increase model capacity without a proportional increase in training compute, since only a small subset of experts is executed per token \cite{lepikhin2021gshard, 10.5555/3586589.3586709, pmlr-v162-du22c, 10.5555/3540261.3540918}. A few design choices are critical for making MoE models effective in practice. First, expert granularity controls the size and number of experts in the MoE model. Coarse-grained MoE models such as Mixtral-8x7B \cite{jiang2024mixtral}, Phi-3.5-MoE \cite{abdin2024phi3technicalreporthighly}, and DBRX \cite{mosaic2024dbrx} all use a relatively small number of large experts. In contrast, fine-grained MoE architectures such as OLMoE \cite{muennighoff2025olmoe}, Qwen2MoE \cite{yang2024qwen2technicalreport}, Qwen3MoE \cite{yang2025qwen3technicalreport}, DeepSeek-V2 \cite{deepseekai2024deepseekv2strongeconomicalefficient}, and DeepSeek-V3 \cite{deepseekai2025deepseekv3technicalreport} use a higher number of small experts to increase expert specialization and reduce training costs. Another axis by which MoE models differ is their token routing strategy. Initial MoE model architectures such as GShard \cite{lepikhin2021gshard} and GLaM \cite{pmlr-v162-du22c} employ router load balancing losses to ensure all experts are utilized in training to prevent expert collapse and token dropping from overloaded experts. BASE Layers \cite{pmlr-v139-lewis21a} instead formulates routing as a linear assignment problem, ensuring experts receive an equal number of tokens during generation. Expert Choice Routing \cite{10.5555/3600270.3600785} allows each expert to instead pick its Top-$K$ tokens. More recent works such as Loss-Free Balancing \cite{wang2024auxiliarylossfreeloadbalancingstrategy} add expert-wise biases to the router's decisions and update these biases throughout generation to achieve approximate load balancing without an auxiliary loss term. Granularity and routing are critical design choices in downstream deployments as they directly shape the tradeoff between capacity, efficiency, and serving latency.
\vspace{0.8\baselineskip}
\paragraph{On-Device Mixture-of-Experts.} The proliferation of open-source MoE models has made on-device deployments of MoEs a critical research direction. MoE inference is often memory-bound since all experts must be accessible at inference time despite their sparse activations. In turn, on-device settings are particularly challenging for MoEs since these devices often have limited GPU memory. In this regime, expert offloading techniques have emerged as a promising way of managing the memory constraints imposed by MoE models. Edge-MoE \cite{10906629} assumes a highly constrained setting that requires offloading quantized experts to disk memory. However, the following works allow for a larger amount of GPU VRAM and enough CPU DRAM to host all offloaded model weights. Mixtral-Offloading \cite{eliseev2023fastinferencemixtureofexpertslanguage}, which operates in this more lenient regime, uniformly quantizes experts to $3$ bits and all remaining weights to $4$ bits, allowing for a larger number of experts to remain resident in GPU DRAM. This variable bit-width quantization incurs additional compute overhead and a non-negligible decrease in model performance. FLoE \cite{zhou2025floe} similarly uses quantization and activation sparsity to increase the size of the GPU-resident expert set but applies them to specific layers where they find compression techniques have the least effect on performance. Other techniques avoid lossy compression schemes and exploit other avenues for reducing I/O latency. MoE-Infinity \cite{xue2025moeinfinityefficientmoeinference} prefetches experts by profiling previous expert activations and uses $K$-means to identify experts that are likely to be activated for a specific request. FineMoE \cite{yu2025taminglatencymemorytradeoffmoebased} aims to improve on this prefetching scheme by creating expert maps that track token-level activation patterns to prefetch experts in every iteration. An alternative approach proposed by Fiddler \cite{kamahori2025fiddler} is to directly move intermediate activations to CPU memory and perform expert computation on the CPU for CPU-resident experts. As activations are smaller than the expert weights themselves, this method trades compute efficiency for a reduction in I/O latency. Methods such as HarMoEny \cite{doucet2025harmoenyefficientmultigpuinference} and Faster MoE \cite{yang2025fastermoellminference} which target the multi-GPU inference setting aim to optimize communication but are not directly applicable to this resource-constrained setting. 
\vspace{0.8\baselineskip}
\paragraph{Routing and Semantic Locality.} While expert offloading is a promising direction for reducing the memory demands of MoE models, recent works suggest that naïvely applying expert offloading may lead to substantial hidden I/O latency. Concretely, MoE models may exhibit low local routing consistency \cite{liang2025modelssuitexpertoffloading}, where nearby tokens may activate different experts. This diversity in activations increases the number of cache transfers, reducing the efficacy of a GPU-resident expert set. Recent literature approaches the issue of routing locality in various ways. Oracle-MoE \cite{zhou2025oraclemoe} pretrains MoE architectures to route tokens based on attention score-based semantic similarity in order to reduce expert transfers. However, its evaluation is limited to GPT-2-scale models, leaving open how well the approach extends to modern MoE models, where larger architectures and tighter GPU memory budgets can make offloading and transfer bottlenecks more pronounced. BlockFFN \cite{song2025blockffnendsideaccelerationfriendlymixtureofexperts} similarly performs MoE pretraining but utilizes an auxiliary loss function that encourages neighboring tokens to activate similar experts. BlockFFN does not specifically target expert offloading and only mentions speculative decoding as a device-side acceleration technique that composes with their method. 
\newpage
\section{Additional Experimental Details}
\label{sec:details}

\subsection{Model Details}
In Table \ref{tab:moe_backbone_details}, we include details of OLMoE, Phi-3.5-MoE, and Mixtral-8x7B, the models that we use in all experiments. 

\begin{table}[htbp]
\centering
\footnotesize
\setlength{\tabcolsep}{3.6pt}
\renewcommand{\arraystretch}{1.12}
\caption{Model details for the MoE backbones used in all experiments. Total parameters count all experts; active parameters correspond to the parameters used per token under Top-$K$ routing. FP16 size is the approximate weight footprint assuming $2$ bytes/parameter.}
\label{tab:moe_backbone_details}

\begin{tabularx}{\linewidth}{@{}l *{7}{>{\centering\arraybackslash}X}@{}}
\specialrule{0.09em}{0.2em}{0.2em}
\rowcolor{gray!12}
\textbf{Model} &
\textbf{Layers} &
\makecell{\textbf{Experts/}\\\textbf{Layer}} &
\makecell{\textbf{Active}\\\textbf{Experts}} &
\makecell{\textbf{Total}\\\textbf{Params. (B)}} &
\makecell{\textbf{Active}\\\textbf{Params. (B)}} &
\makecell{\textbf{Context}\\\textbf{Len.}} &
\makecell{\textbf{FP16}\\\textbf{Size (GB)}} \\
\specialrule{0.06em}{0.15em}{0.15em}

OLMoE & $16$ & $64$ & $8$ & $6.9$ & $1.3$ & $4$K & $13.8$ \\
Phi-3.5-MoE & $32$ & $16$ & $2$ & $42.0$ & $6.6$ & $128$K & $84.0$ \\
Mixtral-8x7B & $32$ & $8$ & $2$ & $46.7$ & $12.9$ & $32$K & $93.4$ \\

\specialrule{0.09em}{0.2em}{0.2em}
\end{tabularx}
\end{table}

\subsection{Optimizer and Loss Function Parameters}
\label{subsec:opt}
In Tables \ref{tab:finetune_hparams_full} and \ref{tab:mlp_hparams}, we highlight training hyperparameters used for fine-tuning each MoE model and training the MLP activation predictor, respectively. 
\begin{table}[htbp]
\centering
\footnotesize
\setlength{\tabcolsep}{3.6pt}
\renewcommand{\arraystretch}{1.12}
\caption{Fine-tuning hyperparameters for each dataset and MoE model backbone.}
\label{tab:finetune_hparams_full}

\begin{tabularx}{\linewidth}{@{}l *{6}{>{\centering\arraybackslash}X}@{}}
\specialrule{0.09em}{0.2em}{0.2em}
\rowcolor{gray!12}
&
\multicolumn{3}{c}{\textbf{Dataset: Dolly15K}} &
\multicolumn{3}{c}{\textbf{Dataset: GSM8K}} \\
\specialrule{0.06em}{0.15em}{0.15em}
\rowcolor{gray!6}
\textbf{Hyperparameter} &
\textbf{OLMoE} & \textbf{Phi-3.5-MoE} & \textbf{Mixtral-8x7B} &
\textbf{OLMoE} & \textbf{Phi-3.5-MoE} & \textbf{Mixtral-8x7B} \\
\specialrule{0.06em}{0.15em}{0.15em}

Peak Learning Rate & $10^{-5}$ & $10^{-5}$ & $10^{-5}$ & $10^{-5}$ & $10^{-5}$ & $10^{-5}$ \\
Optimizer & AdamW & AdamW & AdamW & AdamW & AdamW & AdamW \\
Schedule & Linear & Linear & Linear & Linear & Linear & Linear \\ 
Warmup Ratio & $0.03$ & $0.03$ & $0.03$ & $0.03$ & $0.03$ & $0.03$ \\
Max Tokens & $512$ & $512$ & $512$ & $512$ & $512$ & $512$ \\
LoRA Rank ($r$) & $32$ & $32$ & $32$ & $32$ & $32$ & $32$ \\
LoRA $\alpha$ & $16$ & $16$ & $16$ & $16$ & $16$ & $16$ \\

Epochs & $3$ & $3$ & $3$ & $5$ & $5$ & $5$ \\
$\lambda_{\text{cs}}$ & $0.5$ & $0.5$ & $0.5$ & $0.05$ & $0.05$ & $0.05$ \\
$\lambda_{\text{rm}}$ & $0.1$ & $0.1$ & $0.1$ & $0.01$ & $0.01$ & $0.01$ \\
$\gamma$ (Cache Decay) & $0.9$ & $0.9$ & $0.9$ & $0.9$ & $0.9$ & $0.9$ \\
$\rho$ (Rank Margin) & $0.1$ & $0.1$ & $0.1$ & $0.1$ & $0.1$ & $0.1$ \\

Cache Capacity $C$ & $16$ & $4$ & $2$ & $16$ & $4$ & $2$ \\

\specialrule{0.09em}{0.2em}{0.2em}
\end{tabularx}
\end{table}

\begin{table}[htbp]
\centering
\footnotesize
\setlength{\tabcolsep}{6pt}
\renewcommand{\arraystretch}{1.05}
\caption{Activation predictor MLP training hyperparameters (same values across datasets).}
\label{tab:mlp_hparams}

\begin{tabularx}{0.55\linewidth}{@{}l >{\centering\arraybackslash}X@{}}
\toprule
\rowcolor{gray!12}
\textbf{Hyperparameter} & \textbf{Value} \\
\midrule
Number of Layers & $2$ \\
Input Size & $768$ \\
Hidden Dimension & $1024$ \\
Loss & KL Divergence \\
Learning Rate & $2\times 10^{-4}$ \\
Epochs & $10$ \\
Optimizer & SGD \\
Momentum & $0.9$ \\
Batch Size & $16$ \\
\bottomrule
\end{tabularx}
\end{table}

\newpage 
\subsection{Hardware Configurations}
In Table \ref{tab:hardware_configs_compact}, we characterize the various hardware setups and configurations used in our experiments. 
\begin{table}[htbp]
\centering
\footnotesize
\setlength{\tabcolsep}{4.5pt}
\renewcommand{\arraystretch}{1.05}
\caption{Hardware configurations used in offloaded MoE inference experiments.}
\label{tab:hardware_configs_compact}

\begin{tabularx}{\linewidth}{@{}l *{3}{>{\centering\arraybackslash}X}@{}}
\toprule
\rowcolor{gray!12}
\textbf{Spec} & \textbf{H100 Setup} & \textbf{A100 Setup} & \textbf{RTX 4090 Setup} \\
\midrule
GPU & H100 & A100 & RTX 4090 \\
GPU VRAM (GB) & $80$ & $40$ & $24$ \\
Host CPU & Intel Xeon Platinum 8470 & AMD EPYC 7J13 & AMD EPYC 7B13 \\
Host DRAM (GB) & $2048$ & $216$ & $270$ \\
PCIe Gen xLanes & Gen $5$ x$16$  & Gen $4$ x$16$ & Gen $4$ x$16$ \\
PCIe Bandwidth (GB/s) & $64.0$  & $32.0$ & $32.0$ \\
\bottomrule
\end{tabularx}
\end{table}

\subsection{Evaluation Settings}
Finally, in Table \ref{tab:eval_config}, we present model settings and decoding metrics used during the evaluation of our method. 
\begin{table}[htbp]
\centering
\footnotesize
\setlength{\tabcolsep}{4.8pt}
\renewcommand{\arraystretch}{1.06}
\caption{Evaluation configuration used for throughput measurements.}
\label{tab:eval_config}

\begin{tabularx}{\linewidth}{@{}l *{3}{>{\centering\arraybackslash}X}@{}}
\toprule
\rowcolor{gray!12}
 & \textbf{OLMoE} & \textbf{Phi-3.5-MoE} & \textbf{Mixtral-8x7B} \\
\midrule
Resident Experts / Layer & $16$ & $8$ & $5$ \\
Quantized Modules & $\texttt{gate\_proj, up\_proj, down\_proj}$ & $\texttt{w1, w2, w3}$ & $\texttt{w1, w2, w3}$ \\
Decoding Strategy & Greedy & Greedy & Greedy \\
Throughput Metric & Output tokens/s & Output tokens/s & Output tokens/s \\
Max Output Tokens & $256$ & $256$ & $256$ \\
\bottomrule
\end{tabularx}
\end{table}
\newpage
\section{Loss Function Design Justification}
\label{sec:dc}
In this section, we provide theoretical justification for the choice of our loss functions. Throughout the section, we will use $b$ and $f$ in the subscript to associate symbols with the base model and the fine-tuned model respectively, $\x = [\x_1, \ldots, \x_\T]$ refers to the sequence of tokens, and $\textup{Top-}\C(\mathbf{\cdot})$ refers to an operation that returns a vector with $L_1$ norm $C$ and entries $1$ for $\C$ positions with the highest magnitude and $0$ otherwise. $[N]$ for any natural number $N\in\mathbb{Z}_+$ denotes the set $[1,\ldots, N]$.
\subsection{Choice of $\lcs$}
The cache simulation loss $\lcs(\x)$ is designed to penalize expert activation patterns with excessive switching. Previous work indicates that certain MoE models activate similar experts across adjacent tokens, while others consistently activate a small subset of experts across the decoding trajectory for a given prompt. In the former case, an LRU cache eviction policy is preferable, whereas in the latter an LFU-based eviction policy is more appropriate under conventional caching frameworks.
\\ \\ 
By fine-tuning the router layers, we can control the underlying expert activation distribution, allowing us to tailor the distribution towards a specific cache eviction policy. The $\lcs$ loss encourages tokens to have similar activation distributions. The decay parameter $\gm$ weights older requests relative to newer ones when computing this similarity. In the following proposition, we will first explain how $\gm$ can be used to interpolate between LFU and LRU updates with hard cache states.
\begin{definition}[$\gm$ cache eviction]
\label{def:gm}
    For any given sequence of tokens $\x = [\x_1, \ldots, \x_T]$, let $\rv_f^{(\ly,\ti)}(\x)\in \{0,1\}^E$ denote the binary request vector at layer $\ly$ and token position $\ti$ with $\|\rv_f^{(\ly,\ti)}(\x)\|_1 = \K$. Then, we define the $\gm$-discounted count and the $\gm$-discounted cache state under model $f$ ($\Ch_{f,\gm}$) as follows 
    \begin{align}
        \Ct_{f,\gm}^{(\ly,\ti+1)}(\x) &= \gm\Ct_{f,\gm}^{(\ly,\ti)}(\x) + \rv_f^{(\ly,\ti)}(\x),\label{eq:countdef}\\
        \Ch_{f,\gm}^{(\ly,\ti+1)}(\x) &= \textup{Top-}\C\left(\Ct_{f.\gm}^{(\ly,\ti+1)}(\x)\right),
    \end{align}
    where $\Ct_{f,\gm}^{(\ly,\ti)}(\x)$ is a vector representing the discounted expert request numbers up to $t-1$ for layer $\ly$ under sequence $\x$ and $\Ct_{f,\gm}^{(\ly,1)}(\x)$ is initialized as a vector with $L_1$ norm $\C$. Note that $\Ct_{f,\gm}^{(\ly,1)}$ can be set to anything based on the prefetch initialization used. Alternatively, one could also use a uniform initialization.
\end{definition}
\begin{remark}
From Definition \ref{def:gm}, we have
\begin{itemize}[leftmargin=*,itemsep=0pt,topsep=0pt]
    \item Using $\gm =1$  corresponds to LFU cache updates, and when $\gm\rightarrow 0^+$ we get LRU cache updates. 
\item For any $\gm$, the cache can be updated lazily. Specifically, any expert that is present in the cache for layer $\ly$ at time $\ti+1$, that is any index with a nonzero entry in $\Ch_{f,\gm}^{(\ly,\ti+1)}(\x)$, must either have been present in $\Ch_{f,\gm}^{(\ly,\ti)}(\x)$ or appear in $\rv_f^{(\ly,\ti)}(\x)$. Consequently, the cache can be updated during offloaded inference without incurring additional overhead.
\end{itemize}
\end{remark}
\noindent The total number of cache misses equals the number of PCIe transfers. Under a $\gm$-cache eviction policy, the number of cache misses at layer $\ly$ over $T$ tokens is given by
\begin{equation*}
\textup{Cache misses at layer } \ly :=\sum_{\ti=1}^\T\left\langle\rv_f^{(\ly,\ti)}(\x), \mathbf{1}- \Ch_{f,\gm}^{(\ly,\ti)}(\x) \right\rangle,
\end{equation*}
and the hard loss $\lcs^{\textup{hard}}(\x,f)$ as
\begin{equation}
    \lcs^{\textup{hard}}(\x,f) := \frac{1}{\Ly\T}\sum_{\ly=1}^{\Ly}\sum_{\ti=1}^{\T}\left\langle\rv_f^{(\ly,\ti)}(\x), \mathbf{1}- \Ch_{f,\gm}^{(\ly,\ti)}(\x) \right\rangle.
    \label{eq:lcshard}
\end{equation}
However, the loss above is hard to differentiate with respect to the parameters of $f$ due to the  $\textup{Top-}\C$ operation used to define $\Ch_{f,\gm}^{(\ly,\ti)}(\x)$. Hence, we use a ``soft" cache proxy $\cv_{f,\gm}^{(\ly,\ti)}(\x)$ instead of $\Ch_{f,\gm}^{(\ly,\ti)}(\x)$ where
\begin{equation}
    \cv_{f,\gm}^{(\ly,\ti)}(\x) := \frac{\Ct_{f,\gm}^{(\ly,\ti)}(\x)}{\left\|\Ct_{f,\gm}^{(\ly,\ti)}(\x)\right\|_1} \cdot \C,
    \label{eq:cachedef}
\end{equation}
which is the $\Ct_{f,\gm}^{(\ly,\ti)}(\x)$ scaled to make its $L_1$ norm equal to $\C$.
\begin{proposition}
\label{prop:yelab}
 Using the definitions in Equations \eqref{eq:countdef} and \eqref{eq:cachedef}, we have
    \begin{align*}
    \left\|\Ct_{f,\gm}^{(\ly,\ti)}(\x)\right\|_1 = \gm^{t-1}\C + K\sum_{i=1}^{t-1} \gm^{t-i-1},
    \end{align*}
 define $\Gm^{(t)} = \gm^{t-1} +\frac{\K}{\C}\sum_{i=1}^{t-1}\gm^{t-i-1}$. Then, by Equation \eqref{eq:cachedef}, one can recursively update the soft cache state as 
\begin{equation*}
    \cv_{f,\gm}^{(\ly,\ti+1)}(\x) := \frac{\gm\Gm^{(t)} \cv_{f,\gm}^{(\ly,\ti)}(\x)+ \rv_{f}^{(\ly,\ti)}(\x)}{\Gm^{(t+1)}} \qquad\textup{and} \qquad \Gm^{(t+1)} = \gm \Gm^{(t)} + \frac{\K}{\C}
\end{equation*}
\end{proposition}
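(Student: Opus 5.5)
The plan has two steps: first compute the $L_1$ norm of the discounted count exactly, then substitute it into the normalization \eqref{eq:cachedef} and rearrange.

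\textbf{Step 1: the norm of the count.} The key observation is that the count vector has nonnegative entries. The initialization $\Ct_{f,\gm}^{(\ly,1)}(\x)$ is nonnegative with $L_1$ norm $\C$, each request vector $\rv_f^{(\ly,\ti)}(\x)$ lies in $\{0,1\}^E$, and $\gm\ge 0$. So by induction on $t$ every $\Ct_{f,\gm}^{(\ly,\ti)}(\x)$ is entrywise nonnegative, and on such vectors the $L_1$ norm is additive and positively homogeneous. Applying $\|\cdot\|_1$ to \eqref{eq:countdef} and using $\|\rv_f^{(\ly,\ti)}(\x)\|_1=\K$ gives the scalar recursion
\[
n_{t+1}=\gm n_t+\K, \qquad n_1=\C, \qquad \text{where } n_t:=\|\Ct_{f,\gm}^{(\ly,\ti)}(\x)\|_1 .
\]
Unrolling this recursion by induction yields $n_t=\gm^{t-1}\C+\K\sum_{i=1}^{t-1}\gm^{t-i-1}$, which is the claimed formula. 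The inductive step is the check $\gm\bigl(\gm^{t-1}\C+\K\sum_{i=1}^{t-1}\gm^{t-i-1}\bigr)+\K=\gm^{t}\C+\K\sum_{i=1}^{t}\gm^{t-i}$.

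\textbf{Step 2: the soft cache recursion.} By definition of $\Gm^{(t)}$, Step 1 says $n_t=\C\,\Gm^{(t)}$. Dividing the scalar recursion by $\C$ gives $\Gm^{(t+1)}=\gm\Gm^{(t)}+\K/\C$ with $\Gm^{(1)}=1$, which is the claimed normalizer update. Next, \eqref{eq:cachedef} can be rewritten as $\cv_{f,\gm}^{(\ly,\ti)}(\x)=\Ct_{f,\gm}^{(\ly,\ti)}(\x)/\Gm^{(t)}$, or equivalently $\Ct_{f,\gm}^{(\ly,\ti)}(\x)=\Gm^{(t)}\cv_{f,\gm}^{(\ly,\ti)}(\x)$. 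Substituting this into \eqref{eq:countdef} and dividing by $\Gm^{(t+1)}$ gives
\[
\cv_{f,\gm}^{(\ly,\ti+1)}(\x)=\frac{\gm\Gm^{(t)}\cv_{f,\gm}^{(\ly,\ti)}(\x)+\rv_f^{(\ly,\ti)}(\x)}{\Gm^{(t+1)}},
\]
which is the claimed soft cache update.

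There is no genuine obstacle here. The only points needing care are the nonnegativity argument that justifies linearity of the $L_1$ norm in Step 1, and checking that $\Gm^{(t)}>0$ so the divisions are legal. The latter holds because $\Gm^{(t)}\ge \gm^{t-1}>0$ when $\gm>0$; when $\gm=0$ and $t\ge 2$ we still have $\Gm^{(t)}\ge \K/\C>0$.
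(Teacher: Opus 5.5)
Your proof is correct and is essentially the paper's argument: you establish $\|\Ct_{f,\gm}^{(\ly,\ti)}(\x)\|_1=\C\,\Gm^{(t)}$, rewrite \eqref{eq:cachedef} as $\cv_{f,\gm}^{(\ly,\ti)}(\x)=\Ct_{f,\gm}^{(\ly,\ti)}(\x)/\Gm^{(t)}$, and substitute into \eqref{eq:countdef}. The differences are only cosmetic. You obtain the norm by induction on the scalar recursion $n_{t+1}=\gm n_t+\K$ rather than by unrolling the vector recursion first, and you state explicitly the nonnegativity of the counts and the positivity of $\Gm^{(t)}$, which the paper uses without comment.
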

\begin{proof}
Using the definition in Equation \eqref{eq:countdef}, we have
\begin{equation*}
    \left\|\Ct_{f,\gm}^{(\ly,\ti)}(\x)\right\|_1 = \left\|\gm^{t-1}\Ct_{f,\gm}^{(\ly,1)}(\x) + \sum_{i=1}^{t-1} \gm^{t-i-1}\rv_{f}^{(\ly,\ti)}(\x)\right\|_1 =\gm^{t-1}\C + K\sum_{i=1}^{t-1} \gm^{t-i-1}
\end{equation*}
Next, using this, one can show $\forall \ti\in [\T]$
\begin{equation*}
    \cv_{f,\gm}^{(\ly,\ti)}(\x) = \frac{\Ct_{f,\gm}^{(\ly,\ti)}(\x)}{\left\|\Ct_{f,\gm}^{(\ly,\ti)}(\x)\right\|_1} \cdot \C = \frac{\Ct_{f,\gm}^{(\ly,\ti)}(\x)}{\Gm^{(t)}}.
\end{equation*}
and 
\begin{equation*}
    \cv_{f,\gm}^{(\ly,\ti+1)}(\x) = \left(\Gm^{(t+1)}\right)^{-1}\left(\gm \Ct_{f,\gm}^{(\ly,\ti)}(\x)+ \rv_{f}^{(\ly,\ti)}(\x) \right) =\left(\Gm^{(t+1)}\right)^{-1}\left(\gm\Gm^{(t)} \cv_{f,\gm}^{(\ly,\ti)}(\x)+ \rv_{f}^{(\ly,\ti)}(\x) \right). 
\end{equation*}
The recursion for $\Gm^{(t)}$ follows directly from the definition.
\end{proof}
\noindent Subsequently, we define $\lcs(\x,f)$ by replacing the cache state $\Ch_{f,\gm}^{(\ly,\ti)}(\x)$ in Equation \eqref{eq:lcshard} with the ``soft" cache state $\cv_{f,\gm}^{(\ly,\ti)}(\x)$ in order to facilitate fine-tuning the model $f$ via gradient-based methods
\begin{equation*}
    \lcs(\x,f) := \frac{1}{\Ly\T}\sum_{\ly=1}^{\Ly}\sum_{\ti=1}^{\T}\left\langle\rv_f^{(\ly,\ti)}(\x), \mathbf{1}- \cv_{f,\gm}^{(\ly,\ti)}(\x) \right\rangle.
\end{equation*}
The next lemma provides an interpretation of the cache simulation loss $\lcs$. 

\begin{lemma}
\label{lemma:lcsexp}
    Let $\x\sim\mathcal{D}_{\text{train}}$ denote the training dataset of SFT traces. Define $\lcs(f) := \mathop{\mathbb{E}}_{\x\sim\mathcal{D}_{\text{train}}}[\lcs(\x,f)] $, then we have
    \begin{equation*}
        \lcs(f) = K - \frac{1}{\Ly\T}\sum_{\ly=1}^{\Ly}\sum_{\ti=1}^{\T} \left(\Gm^{(\ti)}\right)^{-1}\left(\sum_{i=1}^{t-1}\gm^{t-1-i}\phi_f^{(\ly)}(\ti,i) + \gm^{t-1}\phi_f^{(\ly)}(\ti,1)\right)
    \end{equation*}
\end{lemma}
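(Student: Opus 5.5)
The plan is a direct expansion of the definition of $\lcs(\x,f)$, using Proposition~\ref{prop:yelab} to write the soft cache state explicitly in terms of past requests. I will take $\phi_f^{(\ly)}(\ti,i)$ to denote the expected overlap $\mathop{\mathbb{E}}_{\x\sim\mathcal{D}_{\text{train}}}\big[\langle \rv_f^{(\ly,\ti)}(\x), \rv_f^{(\ly,i)}(\x)\rangle\big]$ between the request vectors at positions $\ti$ and $i$ of layer $\ly$. This is the number of experts shared, in expectation, by the two tokens. The term $\gm^{t-1}\phi_f^{(\ly)}(\ti,1)$ should then be the expected overlap of $\rv_f^{(\ly,\ti)}(\x)$ with the initial count vector $\Ct_{f,\gm}^{(\ly,1)}(\x)$. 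I would fix this identification first, since it is a convention the paper has to supply. For example, the initial state may be taken to coincide with the first-token or prefetched requests.

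\textbf{Step 1.} Split each summand as $\langle \rv_f^{(\ly,\ti)}, \mathbf{1}-\cv_{f,\gm}^{(\ly,\ti)}\rangle = \langle \rv_f^{(\ly,\ti)},\mathbf{1}\rangle - \langle \rv_f^{(\ly,\ti)}, \cv_{f,\gm}^{(\ly,\ti)}\rangle$. Since $\rv_f^{(\ly,\ti)}$ is binary with $\|\rv_f^{(\ly,\ti)}\|_1=\K$, the first inner product equals $\K$ for every $\ly,\ti,\x$. Averaging over $\Ly\T$ terms and taking the expectation therefore produces the leading constant $\K$.

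\textbf{Step 2.} For the second inner product, use the identity from the proof of Proposition~\ref{prop:yelab}, namely $\cv_{f,\gm}^{(\ly,\ti)}(\x)=\Ct_{f,\gm}^{(\ly,\ti)}(\x)/\Gm^{(\ti)}$. Unrolling the recursion in Equation~\eqref{eq:countdef} gives
$\Ct_{f,\gm}^{(\ly,\ti)} = \gm^{\ti-1}\Ct_{f,\gm}^{(\ly,1)} + \sum_{i=1}^{\ti-1}\gm^{\ti-1-i}\rv_f^{(\ly,i)}$.
The key observation is that $\Gm^{(\ti)}$ is deterministic: it depends only on $\gm$, $\K$, $\C$ and $\ti$, not on $\x$ or $f$. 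This holds because every request vector has the same $L_1$ norm $\K$ and the initialization has norm $\C$. The factor $(\Gm^{(\ti)})^{-1}$ can therefore be pulled outside the expectation.

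\textbf{Step 3.} Use bilinearity of the inner product and linearity of expectation to distribute over the unrolled sum. Each term becomes $\gm^{\ti-1-i}\,\mathop{\mathbb{E}}\langle\rv_f^{(\ly,\ti)},\rv_f^{(\ly,i)}\rangle = \gm^{\ti-1-i}\phi_f^{(\ly)}(\ti,i)$, plus the initialization term. Collecting everything gives the displayed formula. Note that $\Ly$ and $\T$ are fixed, so averaging commutes with the expectation.

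The only real obstacle is bookkeeping rather than mathematics. One point is the $\ti=1$ case, where the sum is empty and only the initialization term survives. The other is consistent notation for that initialization term as $\phi_f^{(\ly)}(\ti,1)$. Everything else is linearity once Step~2 establishes that the normalizer is non-random.
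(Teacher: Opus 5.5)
Your proposal is correct and follows the paper's proof. Both unroll the soft cache via Proposition~\ref{prop:yelab}, use $\|\rv_f^{(\ly,\ti)}\|_1=\K$ to extract the constant $\K$, and pass the expectation through by linearity; your remark that $\Gm^{(\ti)}$ is non-random is exactly what the paper uses silently. Your reading of the initialization term also matches the paper's definition $\phi_f^{(\ly)}(\ti,1)=\mathop{\mathbb{E}}\langle\rv_f^{(\ly,\ti)},\cv_{f,\gm}^{(\ly,1)}\rangle$, since $\Gm^{(1)}=1$ gives $\cv_{f,\gm}^{(\ly,1)}=\Ct_{f,\gm}^{(\ly,1)}$. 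The ``first-token requests'' option you float is not the intended one, and it would conflict with the norm-$\C$ initialization unless $\K=\C$.
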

\noindent where $\phi_f^{(\ly)}(\ti,i) := \mathop{\mathbb{E}}_{\x\sim\mathcal{D}_{\text{train}}}\left[\left\langle\rv_f^{(\ly,\ti)}(\x),\rv_f^{(\ly,i)}(\x)\right\rangle\right]$, $\phi_f^{(\ly)}(t,1) :=\mathop{\mathbb{E}}_{\x\sim\mathcal{D}_{\text{train}}}\left[\left\langle\rv_f^{(\ly,\ti)}(\x),\cv_{f,\gm}^{(\ly,1)}(\x)\right\rangle\right]$, and its derivative with respect to the parameter $\gm$: $\frac{d \lcs(f)}{d\gm}\leq0$ 

\begin{proof}
Recall, $\lcs (\x,f)$ is given by
    \begin{equation*}
        \lcs(\x,f) = \frac{1}{\Ly\T}\sum_{\ly=1}^{\Ly}\sum_{\ti=1}^{\T} \left\langle\rv_f^{(\ly,\ti)}(\x), \one - \cv_{f,\gm}^{(\ly,\ti)}(\x) \right\rangle
    \end{equation*}
    Let us denote $ \lcs^{(\ly,\ti)}(\x):=\left\langle\rv_f^{(\ly,\ti)}(\x), \one - \cv_{f,\gm}^{(\ly,\ti)}(\x) \right\rangle$. Then, for some  initialization $\cv_{f,\gm}^{(\ly,1)}(\x)$, using Proposition \ref{prop:yelab}, one can unroll $\cv_{f,\gm}^{(\ly,\ti)}(\x)$ as follows
    \begin{equation*}
        \cv_{f,\gm}^{(\ly,\ti)}(\x) = \left(\Gm^{(\ti)}\right)^{-1}\left(\sum_{i=1}^{\ti-1}\gm^{t-1-i}\rv_f^{(\ly,i)}(\x) + \gm^{t-1}\cv_{f,\gm}^{(\ly,1)}(\x)\right)\
    \end{equation*}
     Using this expression and the fact $\|\rv_f^{(\ly,\ti)}(\x)\|_1 = K$, one can obtain
    \begin{equation*}
        \lcs^{(\ly,\ti)}(\x,f) = K - \left(\Gm^{(\ti)}\right)^{-1}\left(\sum_{i=1}^{t-1}\gm^{t-1-i}\left\langle\rv_f^{(\ly,\ti)}(\x),\rv_f^{(\ly,i)}(\x)\right\rangle + \gm^{t-1}\left\langle\rv_f^{(\ly,\ti)}(\x),\cv_{f,\gm}^{(\ly,1)}(\x)\right\rangle\right)
    \end{equation*}
    and its derivative with respect to  $\gm$ is given by
    \begin{align*}
        \frac{d \lcs^{(\ly,\ti)}(\x,f)}{d\gm} = -\left(\Gm^{(\ti)}\right)^{-1}\left(\sum_{i=1}^{t-2}(t-1-i)\gm^{t-2-i}\left\langle\rv_f^{(\ly,\ti)}(\x),\rv_f^{(\ly,i)}(\x)\right\rangle + (t-1)\gm^{t-2}\left\langle\rv_f^{(\ly,\ti)}(\x),\cv_{f,\gm}^{(\ly,1)}(\x)\right\rangle\right)\leq 0
    \end{align*}
    Taking the expectation with respect to $\x\sim \mathcal{D}_{\text{train}}$ completes the result.
\end{proof}

\begin{remark}
    From Lemma \ref{lemma:lcsexp}, one can observe that minimizing the loss $\lcs(f)$ entails maximizing the terms $\phi_f^{(\ly)}(\ti,i) := \mathop{\mathbb{E}}_{\x\sim\mathcal{D}_{\text{train}}}\left[\left\langle\rv_f^{(\ly,\ti)}(\x),\rv_f^{(\ly,i)}(\x)\right\rangle\right]$ which encourages the expert request distributions at layer $\ly$ for token positions $\ti$ and $i$ to be similar. Note that $\gm<1$ implies that $\phi_f^{(\ly)}(\ti,i)$ receives a higher weight when $\ti$ and $i$ are closer, and this incentivizes reduced switching. 
\end{remark}

\begin{remark}
    We choose $\gm = 0.9$ in our experiments since $\frac{d \lcs(f)}{d\gm}\leq0$ while also enabling the resulting model to work with both LRU and LFU cache eviction policies.
\end{remark}

\subsection{Choice of $\lrm$}
Given a sequence of tokens $\x = [\x_1, \ldots, \x_T]$, the purpose of the term $\lrm(\x)$ is to discourage router collapse onto a small subset of experts. A broadly adopted strategy for preventing distributional collapse while retaining desirable properties of the base model is KL-regularization. Incorporating a KL penalty is ubiquitous across domains including safety fine-tuning, RLHF, knowledge distillation, and reasoning model training where it serves as a general mechanism for constraining policy updates relative to the base model. In our setting, let $\p^{(\ly,\ti)}_f(\x)$ and $\p^{(\ly,\ti)}_b(\x)$ denote the router distributions at layer $\ly$ and token position $\ti$ induced by the fine-tuned and base routers, respectively. One may then define the KL loss as follows
\begin{equation*}
   \lrm^{\textup{KL}}(\x,f) =  \frac{1}{\Ly\T}\sum_{\ly=1}^{\Ly}\sum_{\ti=1}^{\T} \textup{KL}\left(\p^{(\ly,\ti)}_{f}(\x)\|\p^{(\ly,\ti)}_b(\x)\right).
\end{equation*}
Although this may be a natural choice in tasks such as knowledge distillation where the objective is to match the teacher’s distribution, our setting is different. We are less concerned with divergence between the two distributions in the KL or entropy sense and are ultimately interested in the induced rankings, since the model selects the Top-$\K$ experts from the router distribution. We, therefore, choose a loss that is more directly aligned with mismatches in the induced rankings. Concretely, we model this using the Kendall rank correlation coefficient \cite{kendall1938new}.
\begin{definition}
    Given two router output distributions $\p,\mathbf{q}\in\Delta^{E}$, the Kendall rank correlation coefficient $\tau(\p,\mathbf{q}) \in [-1,1]$ is defined as
    \begin{equation*}
        \tau(\p,\mathbf{q}) = 1- \frac{2\textup{Inv}(\p,\mathbf{q})}{{n\choose 2}},
    \end{equation*}
    where $\textup{Inv}(\p,\mathbf{q})$ is the number of pairwise inversions between distributions $\p$ and $\mathbf{q}$. 
\end{definition}
\noindent Maximizing the rank correlation coefficient is equivalent to minimizing the inversion number. Ideally, one could use the following loss function
\begin{equation*}
    \lrm^{\textup{inv}}(\x) = \frac{1}{\Ly\T}\sum_{\ly=1}^{\Ly}\sum_{\ti=1}^{\T} \textup{Inv}\left(\p^{(\ly,\ti)}_{f}(\x),\p^{(\ly,\ti)}_b(\x)\right).
\end{equation*}
Note that when this loss is $0$, the experts activated by the fine-tuned router and the base router will be exactly the same. However, the term $\textup{Inv}(\cdot,\cdot)$ is difficult to parameterize and differentiate with respect to the parameters of the fine-tuned model $f$. One can express the inversion number as follows
\begin{equation*}
\textup{Inv}\left(\p^{(\ly,\ti)}_{f}(\x),\p^{(\ly,\ti)}_b(\x)\right) = \sum\limits_{i,j\in \left[E\right]}g_b^{(\ly,\ti)}(\x,i,j)\;g_{f,\textup{Inv}}^{(\ly,\ti)}(\x,i,j),
\end{equation*}
where 
\begin{align*}
    g_b^{(\ly,\ti)}(\x,i,j):=\mathbb{I}\!\left\{\p^{(\ly,\ti)}_{b,i}(\x)>\p^{(\ly,\ti)}_{b,j}(\x)\right\} \quad\text{and} \quad g_{f,\textup{Inv}}^{(\ly,\ti)}(\x,i,j) := \mathbb{I}\!\left\{\p^{(\ly,\ti)}_{f,i}(\x)<\p^{(\ly,\ti)}_{f,j}(\x)\right\}.
\end{align*}
\noindent To facilitate learning using gradient-based methods, we use $\mlt$ as a proxy for the inversion number $\textup{Inv}(\cdot,\cdot)$. Specifically, we replace $g_{f,\textup{Inv}}$ by $g_f$, i.e.,
\begin{equation}
\mlt(\x) = \!\!\!\!\! \sum\limits_{i,j\in \left[E\right]}g_b^{(\ly,\ti)}(\x,i,j)\;g_f^{(\ly,\ti)}(\x,i,j) \quad \textup{with}\quad g_f^{(\ly,\ti)}(\x,i,j):=\left[\rho-\left(\p^{(\ly,\ti)}_{f,i}(\x)-\p^{(\ly,\ti)}_{f,j}(\x)\right)\right]_+ 
\label{eq:mltdef}
\end{equation}
and define $\lrm$ as
\begin{equation*}
    \lrm(\x,f) = \frac{1}{\Ly\T}\sum_{\ly=1}^{\Ly}\sum_{\ti=1}^{\T} \mlt(\x).
\end{equation*} 
\begin{lemma}
\label{lemma:lrm}
    Define 
        $\mlt(\x) = \sum\limits_{i,j\in \left[E\right]}\mathbb{I}\!\left\{\p^{(\ly,\ti)}_{b,i}(\x)>\p^{(\ly,\ti)}_{b,j}(\x)\right\}
\left[\rho-\left(\p^{(\ly,\ti)}_{f,i}(\x)-\p^{(\ly,\ti)}_{f,j}(\x)\right)\right]_+ $ as in Equation \eqref{eq:mltdef}. Then, we have $$\mlt(\x)\geq \rho \;\textup{Inv}\left(\p^{(\ly,\ti)}_{f}(\x),\p^{(\ly,\ti)}_b(\x)\right).$$
\end{lemma}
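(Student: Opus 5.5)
The bound follows from a termwise comparison. Both $\mlt(\x)$ and $\textup{Inv}$ are written as sums over the same ordered pairs $(i,j)\in[E]\times[E]$ with the same nonnegative weight $g_b^{(\ly,\ti)}(\x,i,j)\in\{0,1\}$. So it suffices to show, for every pair,
\begin{equation*}
g_f^{(\ly,\ti)}(\x,i,j)\;\geq\;\rho\, g_{f,\textup{Inv}}^{(\ly,\ti)}(\x,i,j).
\end{equation*}
Multiplying by $g_b^{(\ly,\ti)}(\x,i,j)\ge 0$ and summing over $(i,j)$ then gives $\mlt(\x)\geq\rho\,\textup{Inv}(\p^{(\ly,\ti)}_f(\x),\p^{(\ly,\ti)}_b(\x))$, using the representation of $\textup{Inv}$ displayed just before Equation \eqref{eq:mltdef}.

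For the pointwise inequality we split into two cases according to the indicator $g_{f,\textup{Inv}}$. Write $\Delta := \p^{(\ly,\ti)}_{f,i}(\x)-\p^{(\ly,\ti)}_{f,j}(\x)$.

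\textbf{Case 1: $g_{f,\textup{Inv}}=1$.} Then $\Delta<0$, so $\rho-\Delta>\rho$. Since $\rho>0$ (the paper uses $\rho=0.1$), this gives $[\rho-\Delta]_+=\rho-\Delta>\rho=\rho\cdot 1$.

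\textbf{Case 2: $g_{f,\textup{Inv}}=0$.} The right-hand side is $0$, and $[\rho-\Delta]_+\ge 0$ by definition of the positive part.

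The only point requiring care is the implicit assumption $\rho\ge 0$. If $\rho<0$, Case 1 still yields $[\rho-\Delta]_+\ge 0 \ge \rho$, so the inequality survives. We nonetheless state $\rho>0$, as in the experiments.

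It also needs checking that the paper's representation of $\textup{Inv}$ counts each discordant unordered pair exactly once. It does: pairs with $\p_{b,i}>\p_{b,j}$ select a unique ordering of each unordered pair. Ties are excluded by the strict inequalities, consistent with the definition of inversions. Beyond this, no genuine obstacle is expected; the lemma is a direct consequence of the hinge function dominating $\rho$ times the step function $\mathbb{I}\{\Delta<0\}$.
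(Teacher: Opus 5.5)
Your proposal is correct and follows the paper's proof essentially verbatim. Both write $\rho\,\textup{Inv}$ as a sum over ordered pairs weighted by the base-model indicator, and bound each term by $\rho\,\mathbb{I}\{a<b\}\le[\rho-(a-b)]_+$, which is exactly your two-case check; your remark that this termwise bound holds regardless of ties (and of the sign of $\rho$) simply shows that the paper's ``without loss of generality, no ties'' assumption is not actually needed.
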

\begin{proof}
    Without loss of generality, assume that no two entries for either vector $\p^{(\ly,\ti)}_{f}(\x)$ or $\p^{(\ly,\ti)}_b(\x)$ are equal. Then, we have 
    \begin{align*}
        \rho \;\textup{Inv}\left(\p^{(\ly,\ti)}_{f}(\x),\p^{(\ly,\ti)}_b(\x)\right) & = \sum\limits_{i,j\in \left[E\right]} \rho \mathbb{I}\!\left\{\p^{(\ly,\ti)}_{f,i}(\x)<\p^{(\ly,\ti)}_{f,j}(\x)\right\}\mathbb{I}\!\left\{\p^{(\ly,\ti)}_{b,i}(\x)>\p^{(\ly,\ti)}_{b,j}(\x)\right\}\\
        &\leq \sum\limits_{i,j\in \left[E\right]}\left[\rho-\left(\p^{(\ly,\ti)}_{f,i}(\x)-\p^{(\ly,\ti)}_{f,j}(\x)\right)\right]_+\mathbb{I}\!\left\{\p^{(\ly,\ti)}_{b,i}(\x)>\p^{(\ly,\ti)}_{b,j}(\x) \right\} = \mlt(\x),
    \end{align*}
    where the last line follows from $\rho\;\mathbb{I}(a<b) \leq [\rho-(a-b)]_+$
\end{proof}
\begin{remark}
    As a consequence of Lemma \ref{lemma:lrm}, minimizing loss $\lrm$ is equivalent to maximizing a lower bound on the rank correlation coefficient between the fine-tuned and base router expert request distributions averaged across layers $\ly \in [\Ly]$ and token positions $\ti \in [\T]$.

\end{remark}

\newpage 
\section{Additional Experiments}
\label{sec:additional}

\subsection{Out-of-Distribution Generalization Performance of \AlgName}
\label{subsec:generalization}
As mentioned in Section \ref{sec:conclusion}, computational constraints prevented us from performing large-scale fine-tuning on a comprehensive, general-purpose dataset. Such training would likely yield a more deployment-friendly model that reduces inference latency across a wider range of downstream tasks. Instead, we evaluate whether \AlgName\ continues to provide throughput improvements when the fine-tuning data used in the pre-deployment stage and the downstream task differ. 
\begin{table}[!htbp]
\centering
\footnotesize
\setlength{\tabcolsep}{3.6pt}
\renewcommand{\arraystretch}{1.12}
\caption{Decoding throughput (tokens/s) when evaluating on Dolly15K vs.\ GSM8K. For \AlgName, we report throughput after fine-tuning on either Dolly15K or GSM8K; baselines are shown for reference. }
\label{tab:cross_eval_throughput}

\begin{tabularx}{\linewidth}{@{}l *{4}{>{\centering\arraybackslash}X}@{}}
\specialrule{0.09em}{0.2em}{0.2em}
\rowcolor{gray!12}
& \multicolumn{2}{c}{\textbf{Eval: Dolly15K}} & \multicolumn{2}{c}{\textbf{Eval: GSM8K}} \\
\specialrule{0.06em}{0.15em}{0.15em}
\rowcolor{gray!6}
\textbf{Method} &
\textbf{Phi-3.5-MoE} & \textbf{Mixtral-8x7B} &
\textbf{Phi-3.5-MoE} & \textbf{Mixtral-8x7B} \\
\specialrule{0.06em}{0.15em}{0.15em}

\AlgName\ \textbf{(Fine-Tune: Dolly15K)} &
$14.34$ & $9.35$ &
$13.52$ & $8.25$ \\

\AlgName\ \textbf{(Fine-Tune: GSM8K)} &
$10.44$ & $8.21$ &
$15.67$ & $10.38$ \\

\specialrule{0.06em}{0.15em}{0.15em}

Fiddler &
$5.88$ & $5.24$ &
$7.26$ & $4.11$ \\

Mixtral-Offloading &
$8.58$ & $5.08$ &
$8.52$ & $5.04$ \\

DeepSpeed-MoE &
$2.63$ & $1.25$ &
$2.69$ & $1.23$ \\

FLoE &
$5.23$ & $2.25$ &
$5.61$ & $2.20$ \\

MoE-Infinity &
$3.73$ & $1.25$ &
$3.79$ & $1.24$ \\

\specialrule{0.09em}{0.2em}{0.2em}
\end{tabularx}
\vspace{-5mm}
\end{table}

\noindent In Table \ref{tab:cross_eval_throughput}, we consider the same settings and resource constraints highlighted in Sections \ref{sec:setup} and \ref{sec:main}. We find that even when the fine-tuning dataset and downstream task differ fundamentally, the throughput improvements relative to prior baselines persist but are partially dampened. For example, when evaluated on a holdout of Dolly15K, Phi-3.5-MoE fine-tuned exclusively on GSM8K still achieves $10.44$ tokens/s, $1.86$ tokens/s greater than Mixtral-Offloading, the next best baseline. However, this is still slower than the $14.34$ tokens/s achieved when Phi-3.5-MoE is fine-tuned on Dolly15K itself. Mixtral-8x7B can remarkably achieve $8.21$ tokens/s when fine-tuned on GSM8K and evaluated on a holdout of Dolly15K, only $1.14$ tokens/s less than when it is fine-tuned on Dolly15K. Similarly, when Mixtral-8x7B is fine-tuned on Dolly15K, its decoding throughput is only $2.13$ tokens/s less on GSM8K then a model fine-tuned on in-distribution data. This suggests that the improvements present in \AlgName\ are largely preserved even when the resource-constrained device in question has a particularly different local data distribution. We hope that future work can better demonstrate \AlgName's generalizability on various downstream tasks by performing more diverse pre-deployment fine-tuning and downstream evaluation. 

\subsection{Effect of Output Generation Length on Throughput}
\begin{figure*}[htbp]
  \centering
  \includegraphics[width=0.92\textwidth]{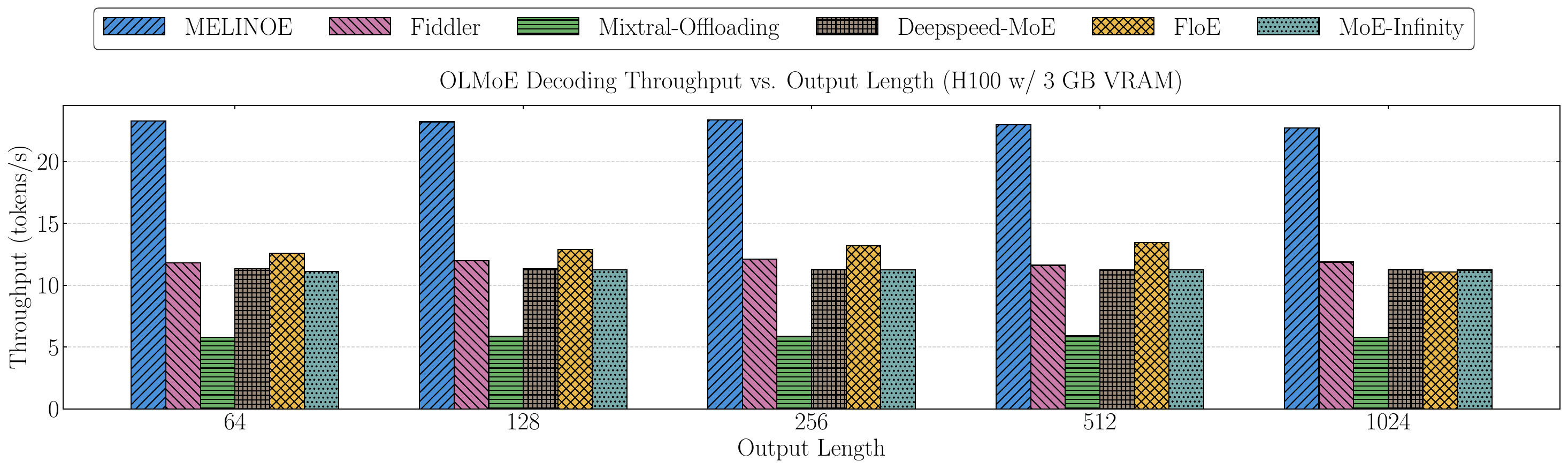}
  \caption{Throughput of baselines with various output lengths using OLMoE on the H100 setup with 3GB of VRAM.}
  \vspace{-2mm}
  \label{fig:output_length}
\end{figure*}
\noindent As output length increases, \AlgName\ maintains stable throughput (Figure \ref{fig:output_length}), which suggests that the fine-tuned model's expert preferences endure even in longer generations. In this regime, memory-efficiency becomes increasingly critical as the KV cache grows with the number of generated tokens, leaving less room for resident experts. Despite this, \AlgName\ sustains near constant tokens/s, indicating that its routing stability reduces cache churn over long decoding horizons. 
\newpage
\subsection{Impact of Fine-Tuning on Expert Routing}
\begin{figure*}[!htbp]
  \centering
  \includegraphics[width=0.95\textwidth]{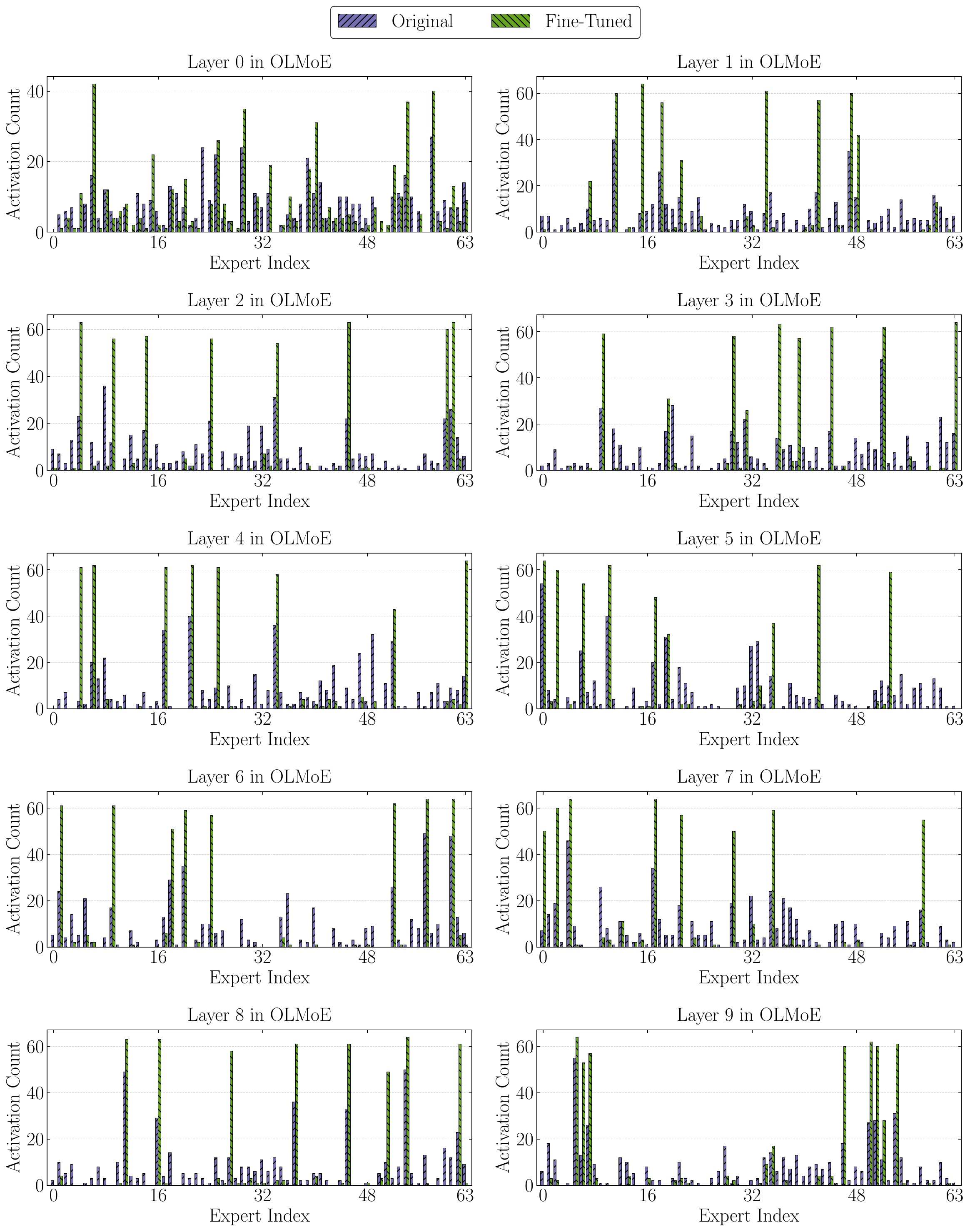}
  \caption{Activations of experts for a single sequence in the first $10$ layers of OLMoE.}
  \label{fig:olmoe_acts}
  \vspace{-5mm}
\end{figure*}
\newpage 
\begin{figure*}[!htbp]
  \centering
  \includegraphics[width=0.95\textwidth]{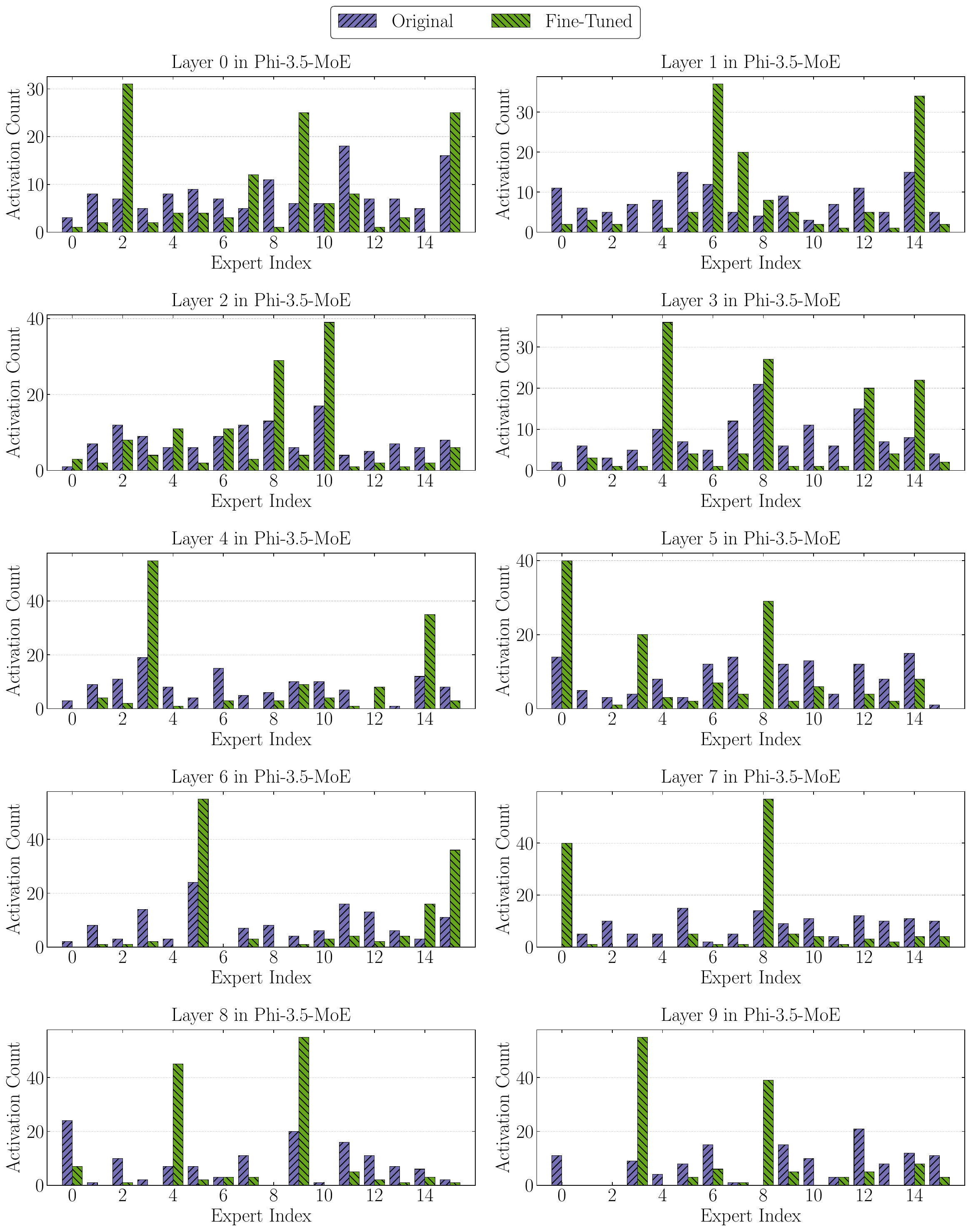}
  \caption{Activations of experts for a single sequence in the first $10$ layers of Phi-3.5-MoE.}
  \label{fig:phi3_acts}
  \vspace{-5mm}
\end{figure*}
\newpage 
\begin{figure*}[!htbp]
  \centering
  \includegraphics[width=0.95\textwidth]{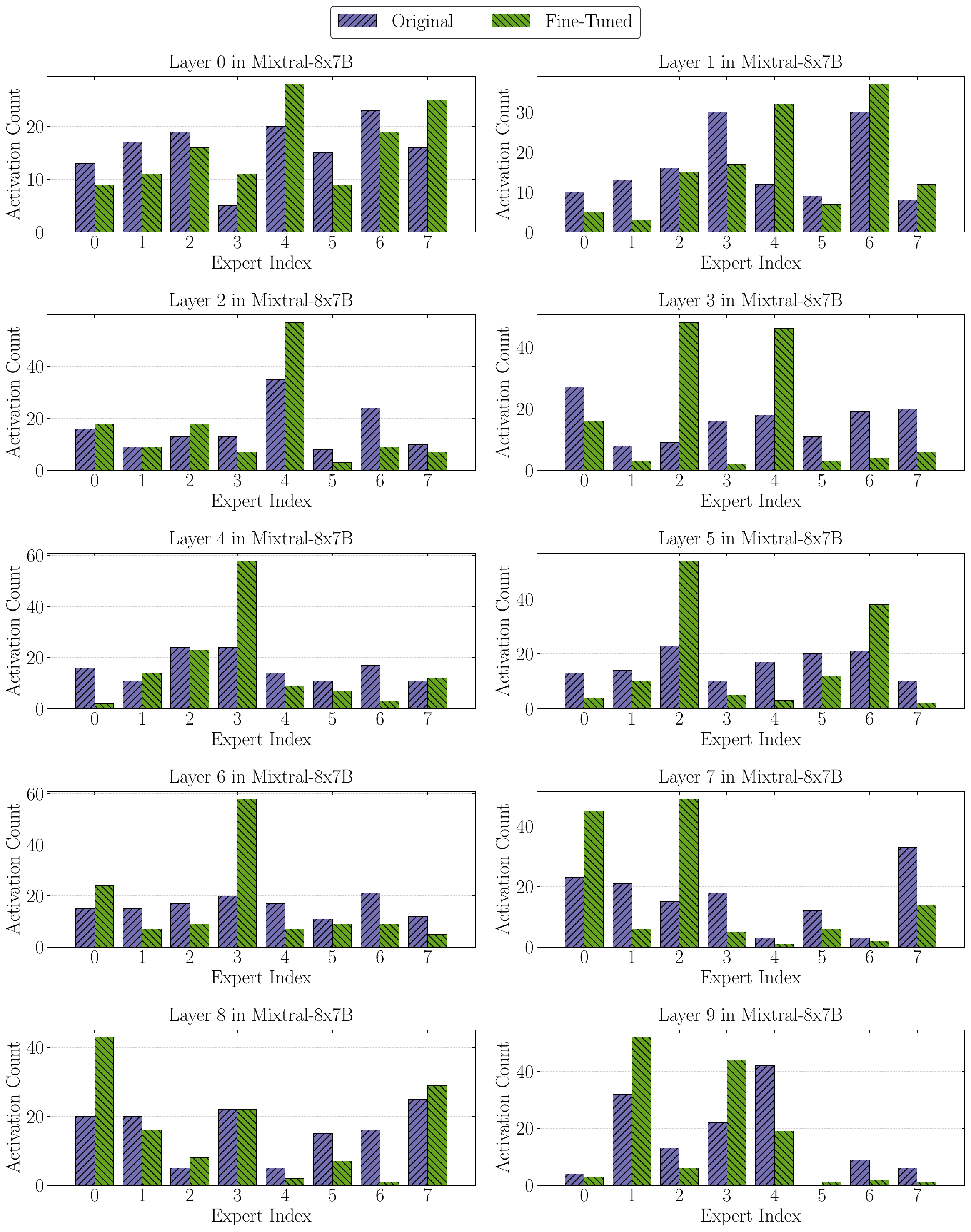}
  \caption{Activations of experts for a single sequence in the first $10$ layers of Mixtral-8x7B.}
  \label{fig:mixtral_acts}
  \vspace{-5mm}
\end{figure*}
\newpage 
\begin{figure*}[!htbp]
  \centering
  \includegraphics[width=0.95\textwidth]{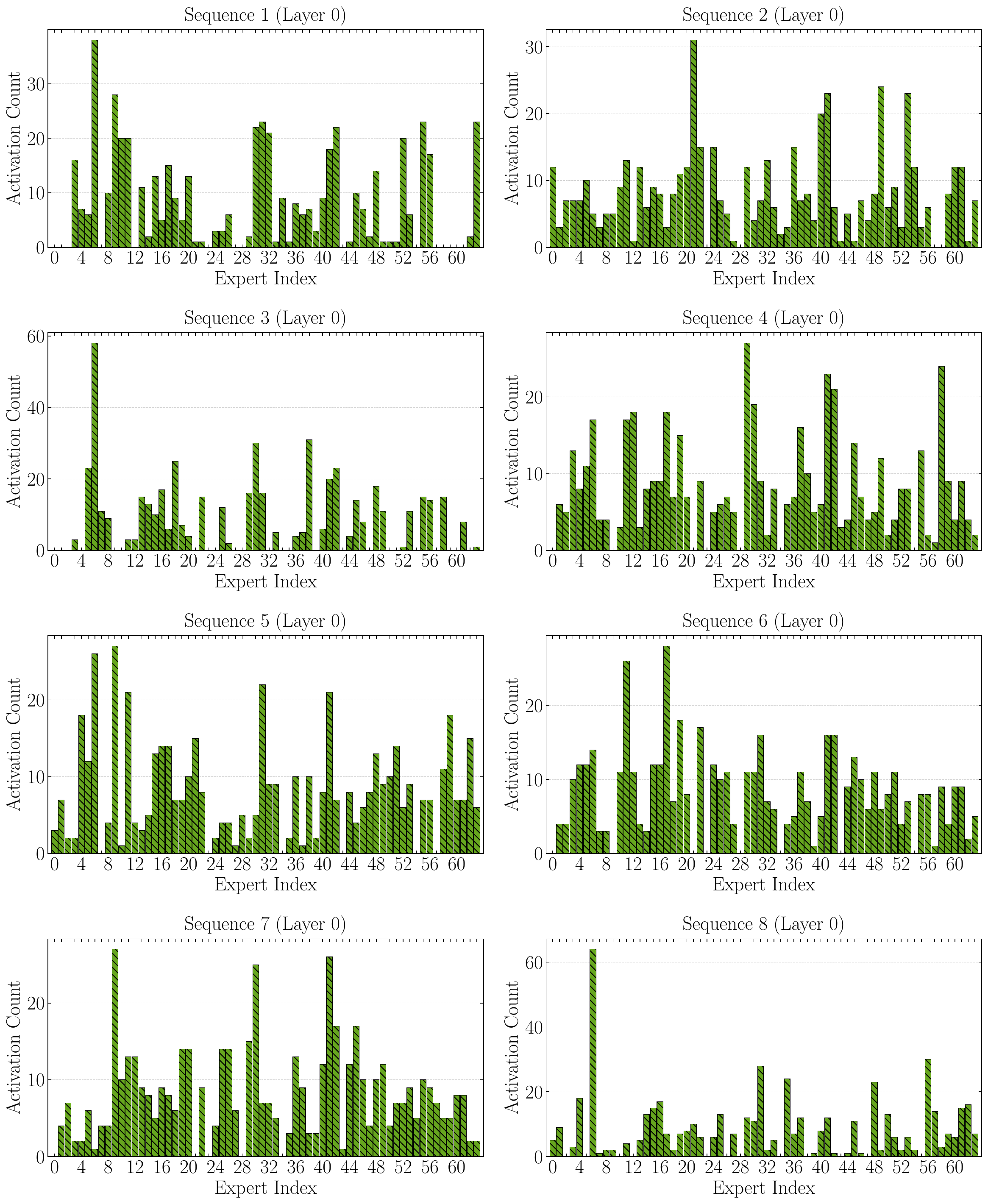}
  \caption{Activations of experts for $8$ different sequences in layer $0$ of OLMoE.}
  \label{fig:olmoe_seq_acts}
\end{figure*}
\noindent Figures \ref{fig:olmoe_acts}, \ref{fig:phi3_acts}, and \ref{fig:mixtral_acts} demonstrate the effects of fine-tuning on expert activations for OLMoE, Phi-3.5-MoE, and Mixtral-8x7B, respectively. Across all model architectures, fine-tuning skews activations towards a few highly preferred experts, justifying fine-tuning as a meaningful approach to make routing more predictable for MoE models. Additionally, Figure \ref{fig:olmoe_seq_acts} demonstrates that routing still remains diverse across multiple sequences, suggesting that the combined effect of the fine-tuning procedure presented in \AlgName\ is to create sequence-specific skew but retain global expert usage diversity. 

\subsection{Ablation on GPU VRAM Budget}
\label{subsec:cache_size}
\begin{figure}[htbp]
  \centering

  \begin{subfigure}[t]{0.92\textwidth}
    \centering
    \includegraphics[width=\textwidth]{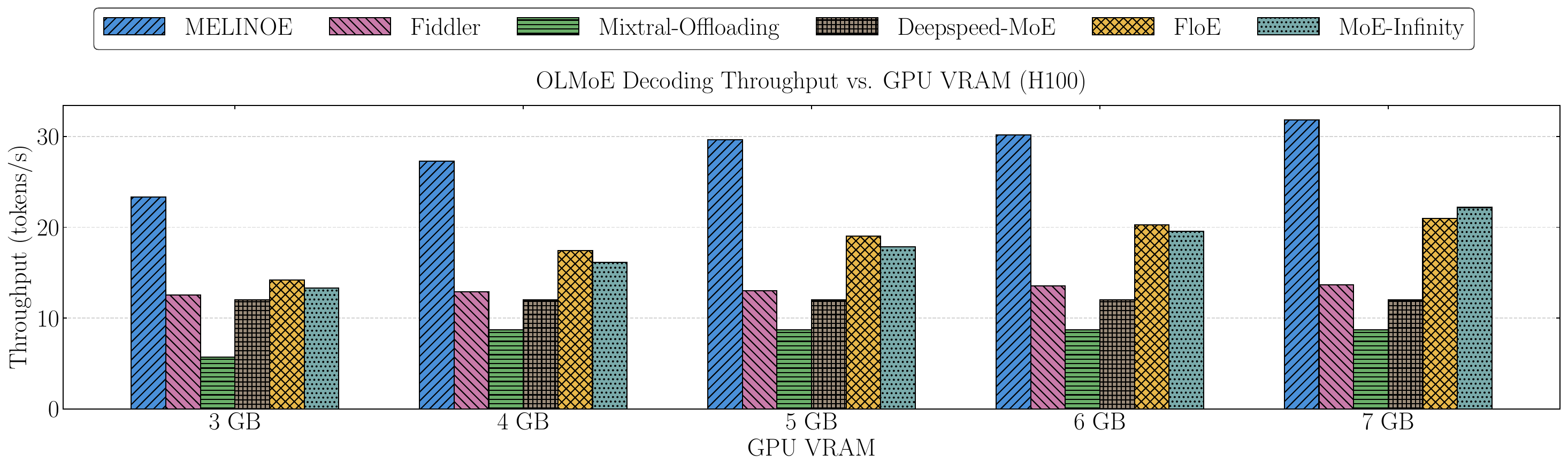}
    \label{fig:olmoe_vram}
  \end{subfigure}

  \vspace{-4mm}

  \begin{subfigure}[t]{0.92\textwidth}
    \centering
    \includegraphics[width=\textwidth]{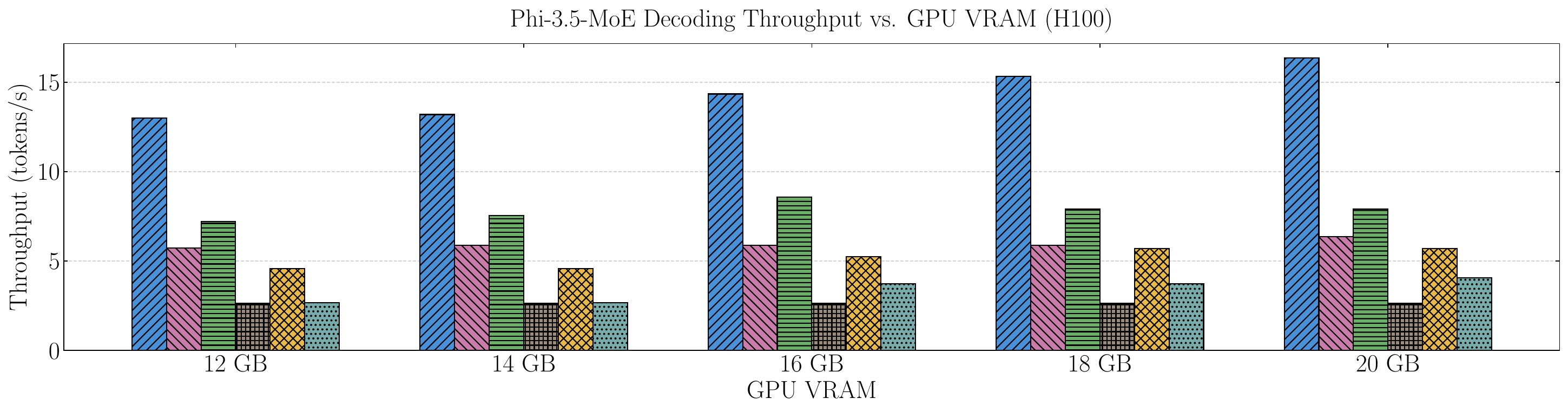}
    \label{fig:phi3_vram}
  \end{subfigure}

  \vspace{-4mm}

  \begin{subfigure}[t]{0.92\textwidth}
    \centering
    \includegraphics[width=\textwidth]{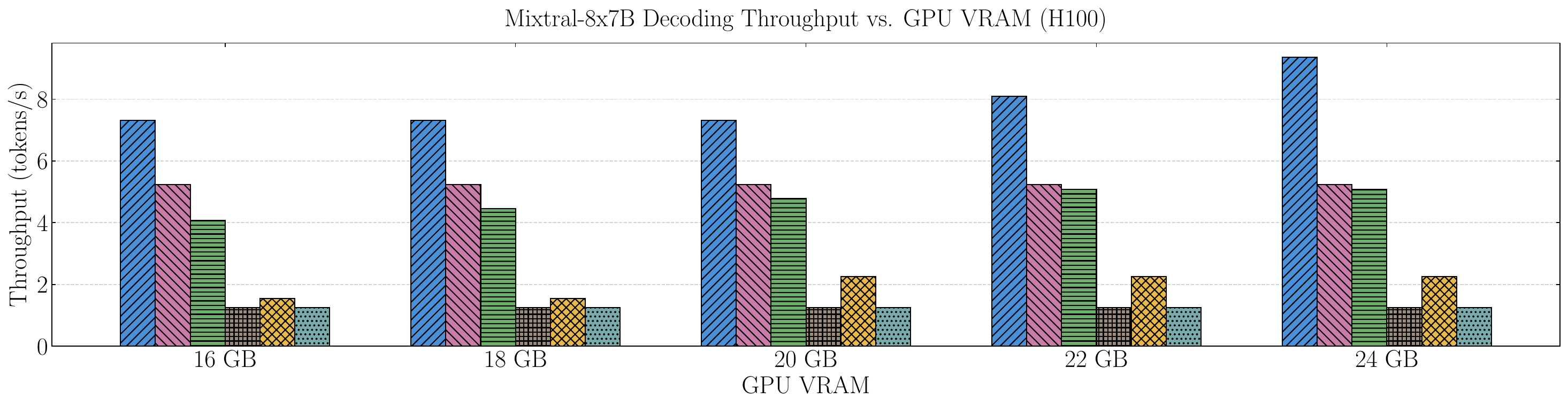}
    \label{fig:mixtral_vram}
  \end{subfigure}
  \vspace{-4mm}
  \caption{Throughput of baselines under GPU VRAM restrictions on H100 across three MoE models.}
  \label{fig:vram_all}
\end{figure}

\noindent Figure \ref{fig:vram_all} displays the impact of GPU VRAM on decoding throughput. Across all VRAM budgets, \AlgName\ outperforms prior offloading baselines, demonstrating its robustness across a  diverse array of downstream deployments. 

\subsection{Ablation on Quantized Experts}
\begin{table}[!htbp]
\centering
\footnotesize
\setlength{\tabcolsep}{5.0pt}
\renewcommand{\arraystretch}{1.12}
\caption{Impact of quantized experts for OLMoE. We report the number of GPU experts per layer and decoding throughput (tokens/s).}
\label{tab:quant_experts_olmoe_multi}

\begin{tabularx}{\linewidth}{@{}l *{4}{>{\centering\arraybackslash}X}@{}}
\specialrule{0.09em}{0.2em}{0.2em}
\rowcolor{gray!12}
& \multicolumn{2}{c}{\textbf{Dolly15K}} & \multicolumn{2}{c}{\textbf{GSM8K}} \\
\specialrule{0.06em}{0.15em}{0.15em}
\rowcolor{gray!6}
&
\makecell{\textbf{GPU-Resident}\\\textbf{Experts / Layer}} &
\textbf{Throughput} &
\makecell{\textbf{GPU-Resident}\\\textbf{Experts / Layer}} &
\textbf{Throughput} \\
\specialrule{0.06em}{0.15em}{0.15em}

\textbf{Base Model} &
$8$ & $13.98$ &
$8$ & $13.66$ \\

\textbf{Base Model + Quantized Experts} &
$24$ & $15.80$ &
$24$ & $17.95$ \\

\textbf{Fine-Tuned Model} &
$8$ & $22.77$ &
$8$ & $19.89$ \\

\textbf{Fine-Tuned Model + Quantized Experts} &
$24$ & $25.99$ &
$24$ & $28.59$ \\

\specialrule{0.09em}{0.2em}{0.2em}
\end{tabularx}
\end{table}
\noindent Quantizing experts to $\texttt{HQQ INT4}$ allows more experts to remain in GPU VRAM. In Table \ref{tab:quant_experts_olmoe_multi}, we analyze the effects of quantization on throughput. Each setting has roughly equal GPU VRAM usage. Quantization improves throughput, but its benefit is not proportional to the increase in resident experts due to compute overhead. The fine-tuned model with $8$ resident experts has greater throughput than the quantized base model with $24$ resident experts.

\subsection{Ablation on Soft Cache Capacity in Loss}
\label{subsec:soft_cache}
\begin{figure*}[htbp]
  \centering
  \includegraphics[width=0.65\textwidth]{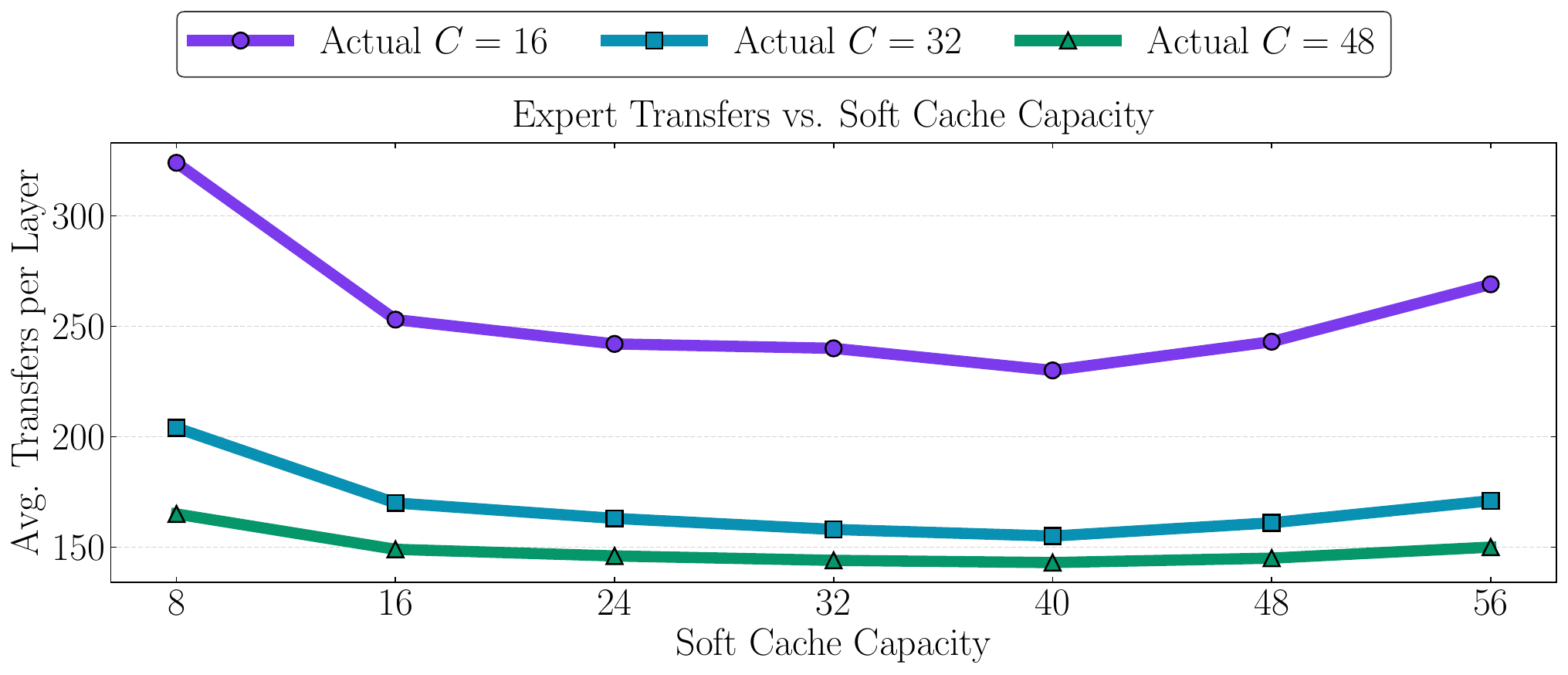}
  \caption{Transfers per layer with fine-tuned model using different soft cache capacities (OLMoE, 64 output tokens).}
  \vspace{-2mm}
  \label{fig:soft_cache}
\end{figure*}
\noindent In Figure \ref{fig:soft_cache}, we evaluate the effect of the soft cache capacity used in fine-tuning on the average number of transfers per layer in the downstream evaluation. We consider three different cache budgets during evaluation, $C=16$, $C=32$, and $C=48$. Performance is noticeably worse when the soft cache capacity is set too low as transfers are dominated by forced evictions, making subtle routing choices more difficult to learn. There is also a slight degradation in the number of transfers when the soft cache budget is set too high as few transfers occur in this regime so transfer penalties remain limited.   

\subsection{Ablation on Loss Function Decay Factor}
\label{subsec:decay}
\begin{figure*}[htbp]
  \centering
  \includegraphics[width=0.65\textwidth]{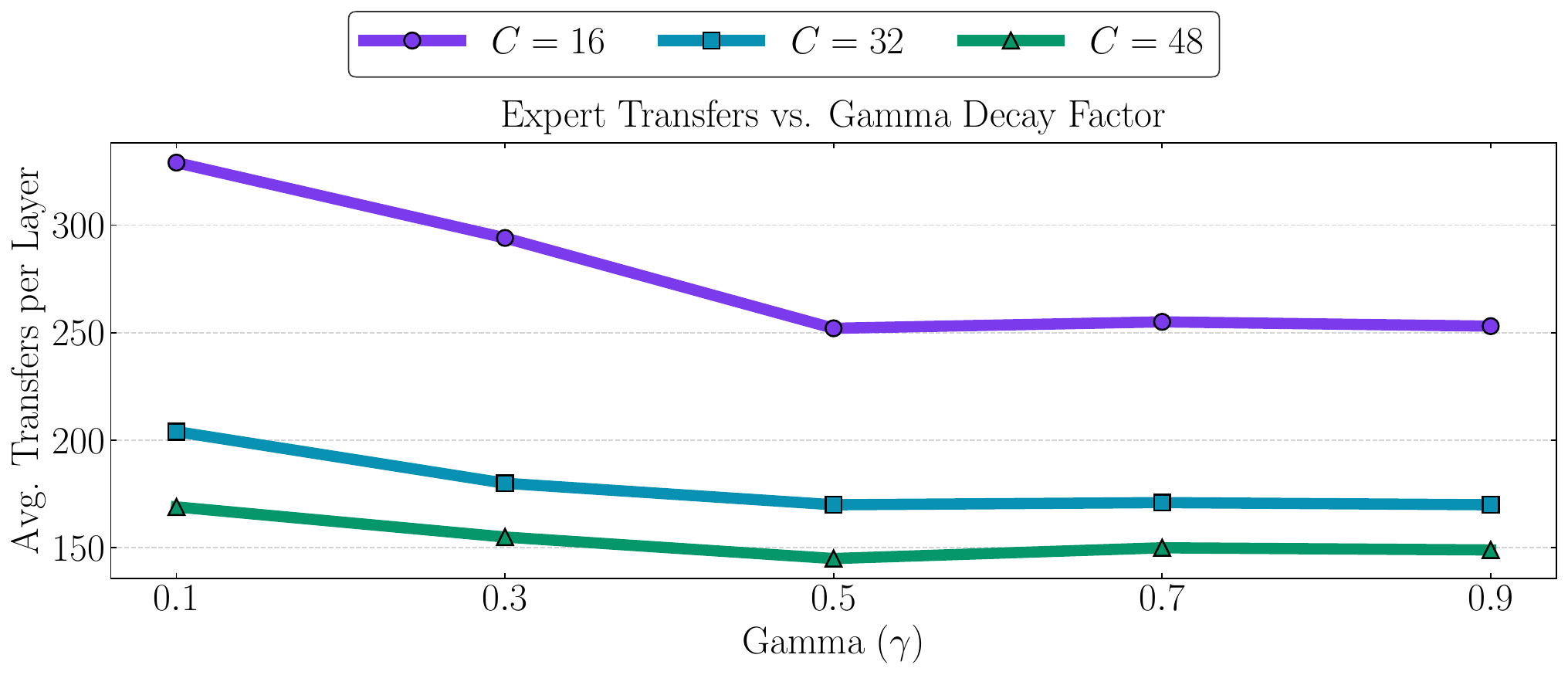}
  \caption{Transfers per layer with fine-tuned model using different $\gamma$ decay factors (OLMoE, 64 output tokens).}
  \vspace{-2mm}
  \label{fig:gamma_cache}
\end{figure*}
\noindent Figure \ref{fig:gamma_cache} studies the decay factor $\gamma$ used in the cache simulation loss, which controls how long past expert usage is factored into current eviction decisions. We find that transfers are high when $\gamma$ is too small but decrease rapidly as $\gamma$ increases across all cache budgets. Thus, overly aggressive decay makes routing decisions too myopic when using an LFU eviction policy. 

\subsection{Ablation on Cache Eviction Policy}
\begin{table}[!htbp]
\centering
\footnotesize
\setlength{\tabcolsep}{7.0pt}
\renewcommand{\arraystretch}{1.12}
\caption{Transfers per layer for OLMoE under different decay factors $\gamma$ and different eviction policies during inference.}
\label{tab:gamma_eviction_ablation}

\begin{tabular}{@{}lcc@{}}
\specialrule{0.09em}{0.2em}{0.2em}
\rowcolor{gray!12}
 & \textbf{LRU Eviction Policy} & \textbf{LFU Eviction Policy} \\
\specialrule{0.06em}{0.15em}{0.15em}

\textbf{Fine-Tuned w/ $\gamma=0.1$} & $314$ & $329$ \\
\textbf{Fine-Tuned w/ $\gamma=0.3$} & $287$ & $294$ \\
\textbf{Fine-Tuned w/ $\gamma=0.5$} & $262$ & $252$ \\
\textbf{Fine-Tuned w/ $\gamma=0.7$} & $264$ & $255$ \\
\textbf{Fine-Tuned w/ $\gamma=0.9$} & $262$ & $253$ \\

\specialrule{0.09em}{0.2em}{0.2em}
\end{tabular}
\end{table}
\noindent Finally, in Table \ref{tab:gamma_eviction_ablation}, we quantify the impact of the cache eviction policy on the number of transfers per layer. For smaller values of $\gamma$, an LRU policy results in fewer cache transfers as the model was fine-tuned with a more reactive cache simulation loss. However, using a larger $\gamma$ with an LFU policy results in the fewest number of transfers overall. 

\end{document}